\newcommand\sr[1]{\sethlcolor{yellow}\hl{SR: #1}\sethlcolor{cyan}}
\newcommand\hg[1]{\sethlcolor{green}\hl{HG: #1}\sethlcolor{cyan}}
\newcommand\bb[1]{\sethlcolor{cyan}\hl{BB: #1}}
\newcommand{\Omit}[1]{}
\newcommand{\denselist}{\itemsep -1pt\partopsep 0pt}
\newcommand{\tup}[1]{\langle #1 \rangle}
\newcommand{\citeay}[1]{\citeauthor{#1} [\citeyear{#1}]}
\newcommand{\citeaywithnote}[2]{\citeauthor{#1} (\citeyear[#2]{#1})}
\newtheorem{definition}{Definition}
\newtheorem{theorem}[definition]{Theorem}
\newtheorem{lemma}[definition]{Lemma}
\newtheorem{corollary}[definition]{Corollary}
\def\withproofs{0} 
\newcommand{\nosuchproof}{\begin{proof}\textcolor{red}{\bf No such proof...}\end{proof}}
\newcommand{\alert}[1]{\textcolor{red}{\bf #1}}
\newcommand{\hobs}{\ensuremath{h_{obs}}\xspace}
\renewcommand{\P}{\mathcal{P}}
\DeclareMathOperator{\nextX}{\raisebox{-0.25ex}{\LARGE$\circ$}}
\DeclareMathOperator{\until}{\mathbin{\mathsf{U}}}
\DeclareMathOperator{\always}{\Box}
\DeclareMathOperator{\eventually}{\Diamond}
\newcommand{\true}{\mathsf{true}}
\title{Generalized Planning: Non-Deterministic Abstractions \\ and Trajectory Constraints}
\author{Blai Bonet \\
        Univ.\ Sim\'on Bol\'{\i}var \\
        Caracas, Venezuela \\
        bonet@ldc.usb.ve
        \And
        Giuseppe De Giacomo \\
        Sapienza Univ.\ Roma \\
        Rome, Italy \\
        degiacomo@dis.uniroma1.it
        \And
        Hector Geffner \\
        ICREA\\ 
        Univ.\ Pompeu Fabra \\
        Barcelona, Spain \\
        hector.geffner@upf.edu
       \And
        Sasha Rubin \\
        Univ.\ Federico II \\
        Naples, Italy \\
        rubin@unina.it
}
\begin{document}
 
\maketitle
 

\begin{abstract}
We study the characterization and computation of general policies for families of problems that 
share a  structure characterized by a common reduction into a single abstract problem. Policies $\mu$ that solve the
abstract problem  $P$ have been shown to solve all problems  $Q$ that reduce to $P$ provided that $\mu$   terminates in $Q$.
In this work, we shed light on  why this termination condition is needed  and how it can be removed.
The key observation is that the abstract problem $P$ captures the common  structure among the concrete problems $Q$
that is local (Markovian) but misses common  structure that is global. We show how such global structure can be captured
by means of  \emph{trajectory constraints} that in many cases can be expressed  as LTL formulas, thus reducing  generalized
planning    to LTL synthesis.  Moreover, for a broad class of problems that involve integer variables that can be increased or
decreased, trajectory constraints can be compiled away, reducing generalized planning to fully observable non-deterministic planning. 
\end{abstract}

\section{Introduction}

Generalized planning, where a single plan works for multiple domains,
has been drawing sustained attention in  the AI community
\cite{levesque:loops,srivastava08learning,bonet09automatic,srivastava:generalized,hu10correctness,hu:generalized,bonet:ijcai2015,BelleL16}.
This form of planning has the aim of finding generalized solutions that
solve in one shot an entire class of problems.
For example, the policy ``if block $x$ is not clear, pick up the clear
block above $x$ and put it on the table'' is general in the sense that
it achieves the goal $clear(x)$ for many problems, indeed any block
world instance. 

In this work, we study the characterization and computation of such
policies for families of problems that share a common structure. This
structure has been characterized as made of two parts: a common pool
of actions and observations \cite{hu:generalized}, and a common
structural reduction into a single abstract problem $P$ that is
non-deterministic even if the concrete problems $P_i$ are deterministic
\cite{bonet:ijcai2015}.  Policies $\mu$ that solve the abstract
problem $P$ have been shown to solve any such problem $P_i$ provided
that they terminate in $P_i$.  In this work, we shed light on why this
termination condition is needed and how it can be removed. The key
observation is that the abstract problem $P$ captures the common
structure among the concrete problems $P_i$ that is local (Markovian)
but misses common structure that is global. We show nonetheless that
such global structure can be accounted for by extending the abstract
problems with \emph{trajectory constraints}; i.e., constraints on the
interleaved sequences of actions and observations or states that are
possible.  For example, a trajectory constraint may state that a
non-negative numerical variable $X$ will eventually have value $zero$
in trajectories where it is decreased infinitely often and increased
finitely often. Similarly, a trajectory constraint can be used to  
express fairness assumptions; namely, that infinite occurrences of an
action, must result in infinite occurrences of each one of its
possible (non-deterministic)  outcomes. The language of the partially
observable non-deterministic problems extended with trajectory  
constraints provides us with a powerful framework for analyzing and
even computing general policies that solve infinite classes of
concrete problems.

\Omit{
Note that forms of abstraction are common in planning and in
verification \cite{Know94,ClGL94}. In particular the well-known
belief-sets construction itself \cite{goldman:conformant} often
adopted for solving planning under partial observability can be seen
as a form of (faithful) abstraction. In that context, \cite{DeGMRS16}
hints at the necessity of trajectory constraints, as we do here.
}


In the following, we first lay out the framework,
discuss the limitations of earlier work, and introduce projections and
trajectory constraints. 
We then show how to do generalized planning using LTL synthesis techniques, and for a
specific class of problems, using efficient planners for fully observable non-deterministic problems. 


\Omit{
  Specifically we focus a special abstraction which is  obtained by
projecting  problems on their observable space that result
in fully observable non-deterministic planning (FOND) problems with
trajectory constraints, that can be solved by  LTL synthesis techniques
when  the trajectory constraints can be  expressed in LTL. 
We  focus then  on a specific class of problems with numerical
variables that can be decreased and increased, where we  find
that generalized plans can be computed using efficient   FOND planners off-the-shelf
once  trajectory constraints are  replaced by implicit fairness assumptions.
}
  \Omit{
     The results are: 1)~there are trajectory constraints  $C$ such the  policies that achieve the goal in the abstract problem
  $P$  given the constraints $C$,  will achieve the goal in the problems $P_i$ that reduce to $P$ and satisfy  $C$,  
  2)~these trajectory constraints can  be often  encoded by means of LTL formula  so that the abstract problem, given by $P$ and $C$ and 
  denoted as $P+C$,   can be solved by LTL synthesis algorithms \cite{ltl-synthesis}, and 3)~the trajectory constraints $C$ required for completeness
  over a broad class of problems that involve  variables that can be increased or decreased \cite{srivastava:aaai2011} can be 
  can be characterized and compiled away in a such a way, that the policies that solve $P+C$   can be computed by efficient fully observable non-deterministic
  (FOND)   planners. 
}

\section{Framework} 

The  framework extends the one by \citeay{bonet:ijcai2015}.

\subsection{Model}

A \emph{partially observable non-deterministic problem} (PONDP)  is a tuple 
$P =  \tup{S,I,\Omega, Act, T,A,obs,F}$ where

\begin{enumerate}[1.]\denselist
\item $S$ is a set of \emph{states} $s$,  
\item $I \subseteq S$ is the  set of \emph{initial states},
\item $\Omega$ is the finite set of \emph{observations}, 
\item $Act$ is the finite set of \emph{actions},
\item $T \subseteq S$ is the set of \emph{goal states}, 
\item $A:S \to 2^{Act}$ is the \emph{available-actions} function, 
\item $obs:S \to \Omega$ is the \emph{observation function}, and 
\item $F: Act \times S \to 2^S \setminus \{\emptyset\}$ is the partial \emph{successor function} with domain $\{(a,s) : a \in A(s)\}$.
\end{enumerate}

In this work we assume \emph{observable goals} and \emph{observable action-preconditions}; these assumptions are made uniformly when we talk about a class of PONDPs $\cal P$:
\begin{definition}\label{dfn:class}
  A \emph{class $\cal P$ of PONDPs} consists of a set of PONDPs $P$ with 1)~the same set of actions $Act$, 2)~the same set of observations $\Omega$,
  3)~a common subset $T_\Omega \subseteq \Omega$  of \emph{goal observations} such that for all $P \in \cal P$, $s \in T$ if and only if $obs(s) \in T_\Omega$,
  and 4)~common subsets of actions $A_\omega \subseteq Act$, one  for  each observation  $\omega \in \Omega$,  such that for all  $P$ in $\cal P$ and $s$ in $P$,
  $A(s) = A_{obs(s)}$.
\end{definition}


\medskip\noindent\textbf{Example.} 
Consider the class $\P$ of problems that involve a single non-negative
integer variable $X$, initially positive, and an observation function $obs$ that determines
if $X$ is $0$ or not (written $X = 0$ and $X \neq 0$).  
The actions $Inc$ and $Dec$ increment and decrement the value of $X$ by
$1$ respectively, except when $X=0$ where $Dec$ has no effect.
The goal is  $X=0$. The problems in $\P$ differ on the initial positive value for
$X$ that may be known, e.g., $X=5$, or partially known, e.g., $X \in [5,10]$. 
\qed


PONDPs are similar to Goal POMDPs \cite{bonet:ijcai2009}  except that uncertainty is represented by sets of states rather than by probabilities.
Deterministic sensing is not restrictive \cite{chatterjee2014pomdps}. For \emph{fully observable non-deterministic problems (FONDP)}
the observation function $obs(s)=s$ is usually omitted. The  results below  apply to   fully observable problems provided that 
  $\Omega$ is  regarded as a set of \emph{state features}, and  $obs(\cdot)$ as the function that  maps  states into features.
\Omit{
Since we are interested in solutions to PONDPs that are general and apply to PONDPs with different states spaces, rather than considering
solutions as functions  that map belief states into actions, we consider instead  solutions  or policies as functions that  map observations or
streams of observations into actions. 
}

\subsection{Solutions}

Let $P=\tup{S,I,\Omega, Act, T,A,obs,F}$ be a PONDP.
A \emph{state-action sequence over $P$} is a finite or infinite sequence of the
form $\tup{s_0,a_0,s_1,a_1, \cdots}$ where each $s_i \in S$ and $a_i \in Act$.
Such a sequence \emph{reaches} a state $s$ if
$s = s_n$ for some $n \geq 0$, and is \emph{goal reaching in $P$} if it reaches a state in $T$. 
An \emph{observation-action sequence over $P$} is a finite or infinite sequence
of the form $\tup{\omega_0,a_0,\omega_1,a_1, \cdots}$ where each
$\omega_i \in \Omega$ and $a_i \in Act$. 
Let us extend $obs$ over state-action sequences: for $\tau=\tup{s_0,a_0,s_1,a_1,\ldots}$, define
$obs(\tau) = \tup{obs(s_0),a_0,obs(s_1),a_1,\ldots}$.
A \emph{trajectory of $P$} is a state-action sequence over $P$ such that 
$s_{0} \in I$, and for $i \geq 0$, $a_i \in A(s_i)$ and  $s_{i+1} \in F(a_i,s_i)$.
An \emph{observation trajectory of $P$} is an observation-action sequence of the form 
$obs(\tau)$ where $\tau$ is a trajectory of $P$.



A \emph{policy} is a partial function $\mu:\Omega^+ \rightarrow Act$ where $\Omega^+$ is the set of finite non-empty sequences over the set $\Omega$. 
A policy $\mu$ is \emph{valid} if it selects applicable actions, i.e.,
if $\tup{s_0,a_0,s_1,a_1,\ldots}$  is a  trajectory, then for the observation sequence $x_i=\tup{obs(s_0),\ldots,obs(s_i)}$,
$\mu(x_i)$ is undefined, written $\mu(x_i) = \bot$, or $\mu(x_i) \in A(s_i)$. 
A  policy that  depends on the last observation only  is called \emph{memoryless}.  We  write memoryless policies as partial functions
$\mu:\Omega \to Act$.




A  trajectory $\tau = \tup{s_0,a_0,s_1,a_1,\ldots}$ is \emph{generated} by $\mu$ if $\mu(obs(s_0) \dots obs(s_n)) = a_n$ for all $n \geq 0$ for which $s_n$ and $a_n$ are in $\tau$.
A \emph{$\mu$-trajectory of $P$} is a maximal trajectory $\tau$ generated by $\mu$, i.e., either a) $\tau$ is infinite, or b) $\tau$ is finite and cannot be extended with $\mu$; namely, 
$\tau = \tup{s_0,a_0,s_1,a_1,\ldots, s_{n-1},a_{n-1},s_n}$ and $\mu(obs(s_0) \dots obs(s_n)) = \bot$. 
If $\mu$ is a valid policy and every $\mu$-trajectory is goal reaching then we say that \emph{$\mu$ solves (or is a solution of) $P$}.
 
%
%

A \emph{transition} in $P$  is a triplet $(s,a,s') \in S \times Act \times S$ such that $s' \in F(a,s)$. 
A transition $(s,a,s')$  \emph{occurs}  in a trajectory $\tup{s_0,a_0,s_1,a_1,\ldots}$ if for some $i\geq 0$, $s=s_i$, $a=a_i$, and $s'=s_{i+1}$.
A trajectory $\tau$ is \emph{fair} if a) it is finite, or b) it is infinite and for all transitions $(s,a,s')$ and $(s,a,s'')$ in $P$ 
for which $s'\not= s''$, if one transition occurs infinitely often in $\tau$, the other occurs infinitely often in $\tau$ as well. 
If $\mu$ is a valid policy and every fair $\mu$-trajectory is goal reaching
then we say that \emph{$\mu$ is a fair solution to $P$}~\cite{cimatti:cyclic}.

\Omit{All such deterministic problems $P$ reduce to a single abstract
non-deterministic problem $P'$ with two states $X=0$ and $X=1$ through the
function $h$ that maps the state $X=0$ of $P$ into the state $X=0$ of $P'$
and the states $X=i$ of $P$, for $i > 0$, into the state $X=1$ of $P'$.
The state transition function $F'$ in $P'$ is that the action $Inc$ maps
the state $X=0$ into $X=1$, and the state $X=1$ into itself, and the action
$Dec$ maps the state $X=0$ into itself, and the state $X=1$ into both $X=1$
and $X=0$. The initial state in $F'$ is $X=1$ and the goal state $X=0$. \qed
}
\subsection{The Observation Projection Abstraction}         

  
\citeay{bonet:ijcai2015} introduce \emph{reductions} as functions that can map a set of ``concrete PONDPs problems'' $P$ into 
a single, often smaller, \emph{abstract}  PONDP  $P'$ that captures part of the common structure. 
A general way for reducing a class of problems into a smaller one is
by \emph{projecting} them onto their common observation space: 

\begin{definition}
\label{def:o-reduction}
Let $\cal P$ be a class of PONDPs (over actions $Act$, observations $\Omega$, and the set of goal observations $T_\Omega$). 
Define the FONDP  $P^o\!=\!\langle S^o,I^o,Act^o,T^o\!,A^o,F^o\rangle$,
called the \emph{observation projection of $\cal P$}, where  
\begin{enumerate}[H1.]\denselist
\item $S^o = \Omega$, 
\item $\omega \in I^o$ iff there is $P \in \cal P$ and $s \in I$ s.t.\ $obs(s) = \omega$,
\item $Act^o = Act$,
\item $T^o = T_\Omega$,
\item $A^o(\omega) = A_\omega$ for every $\omega \in \Omega$,
\item for $a \in A^o(\omega)$, define $\omega' \in F^o(a,\omega)$ iff there exists $P \in \cal P$, 
  $(s,a,s')$ in $F$ s.t.\ $obs(s)=\omega$ and  $obs(s')=\omega'$.
\end{enumerate}
\end{definition}


This definition generalises the one by \citeay{bonet:ijcai2015} which is for a single PONDP $P$, not a class $\cal P$.


\Omit{By construction, the function \hobs complies with conditions R1--R4 of
begin a reduction from $P$ into $P^o$. Condition R5 is also satisfied
as goals in this work are assumed to be observable.

\begin{lemma}
\label{lemma:reduction}
For observable goals, a problem $P$ reduces to the observation projection
$P^o$ through the function $h=\hobs$.
\end{lemma}
\if\withproofs1
\begin{proof}
We show conditions R1--R5 in Definition~\ref{def:reduction}:
\begin{enumerate}[R1.]\denselist
\item If $a\in A^o(\hobs(s))$, then $a\in A(s)$ by H6.
\item For state $s$, $obs(s)=\hobs(s)=obs^o(\hobs(o))$ by H7.
\item Let $s'\in F(a,s)$ and $a\in A^o(\hobs(s))$.
  Since the transition $(s,a,s')$ is in $F$, the transition
  $(\hobs(s),a,\hobs(s'))$ is in $F^o$ by H8.
\item If $s\in I$ then $\hobs(s)\in I^o$ by H2.
\item If $\hobs(s)\in T^o$ for some state $s$, then $s\in T$
  since goals are observable and by H5.
\end{enumerate}
\vskip -1em
\end{proof}
\fi
}

%

\medskip\noindent\textbf{Example (continued).}
Recall the class $\cal P$ of problems with the non-negative integer variable $X$.
Their observation projection $P^o$ is the FONDP with two states, that
correspond to the two possible observations, and which we denote as $X> 0$ and $X=0$.
The transition function $F^o$ in $P^o$ for the action $Inc$ maps the 
state $X=0$ into $X>0$, and the state $X> 0$ into itself; 
and for the action $Dec$ it maps the state $X=0$ into itself, and $X>0$ into both $X>0$
and $X=0$. The initial state in $P^o$ is $X> 0$ and the goal state is $X=0$. \qed



\medskip
The following is a direct consequence of the definitions:
\begin{lemma}
\label{lemma:obsprojection}
Let $\cal P$ be a class of PONDPs and let $\tau$ be a trajectory of  some 
$P \in \cal P$. Then (1)~$\tau$ is goal reaching
in $P$ iff the observation-action sequence $obs(\tau)$ is goal reaching in $P^o$.
Furthermore, if $\mu$ is a valid policy for $P^o$, then
(2) $\mu$ is a valid policy for $P$ and (3) if $\tau$ is a
$\mu$-trajectory then $obs(\tau)$ is a $\mu$-trajectory of  $P^o$. 
\end{lemma}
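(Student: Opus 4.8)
The plan is to isolate one elementary claim and read all three parts off from it. The claim is that \emph{if $\tau = \tup{s_0,a_0,s_1,a_1,\ldots}$ is a finite or infinite trajectory of some $P\in\cal P$, then $obs(\tau)$ is a trajectory of the FONDP $P^o$} (whose observation function is the identity, $obs^o(\omega)=\omega$). To prove this I would verify the three clauses in the definition of a trajectory of $P^o$. First, $s_0\in I$ and $P\in\cal P$, so $obs(s_0)\in I^o$ by H2 of Definition~\ref{def:o-reduction}. Second, for each relevant $i$, $a_i\in A(s_i)=A_{obs(s_i)}=A^o(obs(s_i))$, using condition~4 of Definition~\ref{dfn:class} and H5. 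Third, $(s_i,a_i,s_{i+1})$ is a transition of $F$ whose observation endpoints are $obs(s_i)$ and $obs(s_{i+1})$, so $obs(s_{i+1})\in F^o(a_i,obs(s_i))$ by H6. The one thing to watch here is that the existential quantifiers over ``some $P\in\cal P$'' in H2 and H6 are discharged by the very problem $P$ that $\tau$ lives in; no other member of $\cal P$ is invoked.

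Part~(1) is then immediate: a state-action sequence reaches $T$ at position $n$ exactly when $s_n\in T$, which by condition~3 of Definition~\ref{dfn:class} holds iff $obs(s_n)\in T_\Omega=T^o$ (using H4), i.e.\ iff $obs(\tau)$ reaches $T^o$ at position $n$. Hence $\tau$ is goal reaching in $P$ iff $obs(\tau)$ is goal reaching in $P^o$.

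For part~(2), let $\mu$ be valid for $P^o$ and let $\tau=\tup{s_0,a_0,\ldots}$ be an \emph{arbitrary} trajectory of $P$ (validity is quantified over all trajectories, not only over $\mu$-generated ones, which is precisely why the core claim is stated for arbitrary $\tau$). By the claim, $obs(\tau)$ is a trajectory of $P^o$, so validity of $\mu$ in $P^o$ gives, for every $i$, that $\mu(\tup{obs(s_0),\ldots,obs(s_i)})=\bot$ or $\mu(\tup{obs(s_0),\ldots,obs(s_i)})\in A^o(obs(s_i))$; and $A^o(obs(s_i))=A_{obs(s_i)}=A(s_i)$, which is exactly the validity condition for $\mu$ in $P$. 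For part~(3), let $\tau$ be a $\mu$-trajectory of $P$; by the claim $obs(\tau)$ is a trajectory of $P^o$, and it remains to see it is generated by $\mu$ and maximal. It is generated by $\mu$ because the strings $obs(s_0)\cdots obs(s_n)$ that drive $\mu$ along $\tau$ and along $obs(\tau)$ are literally the same, so $\mu(obs(s_0)\cdots obs(s_n))=a_n$ for every relevant $n$ carries over verbatim. It is maximal because if $\tau$ is infinite then so is $obs(\tau)$, while if $\tau$ is finite and ends in $s_n$ with $\mu(obs(s_0)\cdots obs(s_n))=\bot$, then $obs(\tau)$ ends in $obs(s_n)$ with the same value $\bot$ and so cannot be extended by $\mu$ either.

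There is no real obstacle here; the argument is bookkeeping over the definitions, as the paper itself signals. The only steps that reward attention are matching the existential witnesses in H2/H6 to the concrete $P$ at hand when establishing the core claim, and noticing that the finiteness/maximality clause of part~(3) transfers exactly because the observation prefixes fed to $\mu$ coincide on $\tau$ and on $obs(\tau)$.
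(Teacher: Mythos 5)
Your proposal is correct, and it reorganizes the argument slightly differently from the paper. You factor everything through one $\mu$-independent claim --- that $obs$ maps every trajectory of $P$ to a trajectory of $P^o$ --- proved from H2, H6 and, for applicability, from the uniform observability of preconditions (condition~4 of Definition~\ref{dfn:class} together with H5, giving $a_i \in A(s_i) = A_{obs(s_i)} = A^o(obs(s_i))$). The paper instead proves (2) and (3) directly: in (3) it establishes step by step that $obs(\tau)$ is a trajectory of $P^o$ generated by $\mu$, deriving applicability of each $a_i$ in $P^o$ from the \emph{validity of $\mu$} for $P^o$ rather than from the precondition-observability assumption; part (1) and the maximality argument are essentially identical to yours. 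What your decomposition buys is exactly the point you flag: validity is quantified over all trajectories, so the paper's proof of (2) applies validity of $\mu$ in $P^o$ to the observation prefixes $x_i$ of an \emph{arbitrary} trajectory of $P$, which presupposes that these prefixes arise from trajectories of $P^o$ --- and since the actions of an arbitrary trajectory need not be $\mu$-selected, that presupposition is justified precisely by your core claim (via observable preconditions), not by validity alone. So your version makes explicit a step the paper leaves implicit, at the modest cost of invoking the class-level assumption of Definition~\ref{dfn:class}, which the paper's proof of (3) manages to avoid.
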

\if\withproofs1
\begin{proof}
Let $\tau=\tup{s_0,a_0,s_1,a_1,\ldots}$ be a trajectory of  $P$ and
let $obs(\tau)=\tup{obs(s_0),a_0,obs(s_1),a_1,\ldots}$. 

For (1): $s_i \in T$ iff $obs(s_i) \in T_\Omega$ iff $s_i \in T^o$.

For (2): let $x_i=\tup{obs(s_1),\ldots,obs(s_i)}$ and suppose $\mu(x_i)$ is defined.
Since $\mu$ is valid for $P^o$, $\mu(x_i)\in A^o(obs(s_i))$.
Then, by H5, $\mu(x_i)\in A(s_i)$.

For (3): Suppose that $\tau$ is a $\mu$-trajectory.
By H2, $obs(s_0)$ is an initial state in $P^o$. Since $\mu(obs(s_0)) = a_0$ 
and $\mu$ is valid for $P^o$, we have that $a_0 \in A^o(obs(s_0))$. 
Hence, by H6, $obs(s_1)\in F^o(a_0,obs(s_0))$.
Repeat the argument to obtain that $obs(\tau)$ is a trajectory of  $P^o$ generated by $\mu$.
To see that $obs(\tau)$ is a maximal trajectory of  $P^o$ generated by $\mu$. Suppose $obs(\tau)$ were finite, say
$obs(\tau) = \tup{obs(s_0), a_0, \cdots, obs(s_{n-1}),a_{n-1},obs(s_n)}$, and $\mu(obs(s_0) \dots obs(s_n)) \neq \bot$. Then 
$\tau$ would not be a maximal trajectory of  $P$ generated by $\mu$.
\end{proof}
\fi

\subsection{General Policies}

\Omit{
  To motivate our work, we compare our setup with that of \citeay{bonet:ijcai2015} who define 
$P^o$ for a single PONDP (and not for a class $\cal P$, as we do).
Moreover, using the terminology in that paper, Definition~\ref{def:o-reduction} above says that the every $P \in \cal P$ 
reduces to $P^o$ through the function $h:s \mapsto obs(s)$.}
A main result of \citeaywithnote{bonet:ijcai2015}{Theorem~5} is that:

\begin{theorem}
\label{thm:bg2015}
Let $\mu$ be a fair solution for the projection $P^o$ of the single problem $P$.
If all the $\mu$-trajectories in $P$ that are fair are also \emph{finite}, then $\mu$ is a fair solution for $P$.
\end{theorem}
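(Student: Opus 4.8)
The plan is to unwind the definition of ``fair solution'' and reduce each required property to the corresponding property of $P^o$ via Lemma~\ref{lemma:obsprojection}. Recall that to show $\mu$ is a fair solution of $P$ we must check two things: (i)~$\mu$ is a valid policy for $P$, and (ii)~every fair $\mu$-trajectory of $P$ is goal reaching in $P$. Part (i) is immediate: since $\mu$ is a fair solution of $P^o$ it is in particular a valid policy for $P^o$, and then Lemma~\ref{lemma:obsprojection}(2) gives that $\mu$ is valid for $P$. So the whole content of the theorem is part (ii).

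For (ii), I would take an arbitrary fair $\mu$-trajectory $\tau = \tup{s_0,a_0,s_1,\ldots}$ of $P$ and argue it is goal reaching. By the hypothesis of the theorem, $\tau$ is finite. By Lemma~\ref{lemma:obsprojection}(3), $obs(\tau)$ is a $\mu$-trajectory of $P^o$, and it is finite because $\tau$ is. Now invoke the definition of fairness: a finite trajectory is fair (clause (a) of the definition of \emph{fair} trajectory), so $obs(\tau)$ is a \emph{fair} $\mu$-trajectory of $P^o$. Since $\mu$ is a fair solution of $P^o$, every fair $\mu$-trajectory of $P^o$ is goal reaching, hence $obs(\tau)$ reaches $T^o$. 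Finally, Lemma~\ref{lemma:obsprojection}(1) transfers this conclusion back to $P$: $\tau$ is goal reaching in $P$. As $\tau$ was an arbitrary fair $\mu$-trajectory, (ii) holds and $\mu$ is a fair solution of $P$.

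The argument is mostly bookkeeping once Lemma~\ref{lemma:obsprojection} is available, so there is no single hard calculation. The one point that deserves care — and that explains why the finiteness hypothesis is doing real work — is that $obs(\cdot)$ does \emph{not} preserve fairness of infinite trajectories in either direction: the projection can collapse two distinct concrete transitions $(s,a,s')$ and $(s,a,s'')$ into the same abstract transition (or split one abstract transition into several concrete ones), so a fair infinite trajectory of $P$ need not have a fair image in $P^o$. The restriction to finite $\mu$-trajectories of $P$ sidesteps this entirely, since fairness is vacuous for finite trajectories; this is precisely the gap that the hypothesis ``all fair $\mu$-trajectories in $P$ are finite'' is closing. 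I would also double-check the reading of Lemma~\ref{lemma:obsprojection}(3) that $obs(\tau)$ is \emph{maximal} as a $\mu$-trajectory of $P^o$ (not merely a prefix of one), so that it genuinely falls under the fair-solution guarantee for $P^o$; the proof of that lemma already establishes maximality, via the observation that if $obs(\tau)$ could be extended by $\mu$ then so could $\tau$.
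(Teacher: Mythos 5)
Your argument is correct. Note that the paper itself does not prove Theorem~\ref{thm:bg2015}: it is imported from Bonet and Geffner (2015, Theorem~5), stated there for general reductions with a termination condition, so there is no in-paper proof to compare against line by line. Your derivation is the natural in-framework proof of the specialized statement: validity of $\mu$ for $P$ comes from Lemma~\ref{lemma:obsprojection}(2); for an arbitrary fair $\mu$-trajectory $\tau$ of $P$, the hypothesis forces $\tau$ to be finite, Lemma~\ref{lemma:obsprojection}(3) gives that $obs(\tau)$ is a (maximal) $\mu$-trajectory of $P^o$ of the same finite length, finiteness makes it vacuously fair, the fair-solution property of $P^o$ makes it goal reaching, and Lemma~\ref{lemma:obsprojection}(1) pulls goal reachability back to $P$. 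You also correctly identify the two points where care is needed and where the hypothesis does real work: the projection does not preserve fairness of infinite trajectories (which is exactly why the finiteness condition cannot simply be dropped, as the paper's $Inc$/$Dec$ example shows), and $obs(\tau)$ must be maximal, which the proof of Lemma~\ref{lemma:obsprojection}(3) establishes. No gaps.
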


One of the main goals of this work is to remove the termination (finiteness) condition so that
a policy $\mu$ that solves the projection $P^o$  of a class of problems $\cal P$
will generalize automatically to all problems in the class.  For this, however, we will need to extend the
representation of abstract problems. 

\medskip\noindent\textbf{Example (continued).}
The policy $\mu$ that maps the observation $X > 0$ into the
action $Dec$ (namely, ``decrease $X$ if positive'') is a fair solution to $P^o$, and to 
each problem $P$ in the class $\P$ above. The results of \citeay{bonet:ijcai2015}, however,  are
not strong enough for establishing this generalization. What they show instead is that the generalization holds
provided that $\mu$ terminates at each problem $P\in\P$.\qed

\section{Trajectory Constraints}


In the example, the finiteness condition is required by Theorem~\ref{thm:bg2015} because
there are other problems, not in the class $\P$, that also reduce to $P^o$ but which are not solved by $\mu$.
Indeed, a problem $P'$ like any of the problems in $\P$ but where the $Dec$ action
increases $X$ rather than decreasing it when $X=5$ is one such problem: one that cannot be solved
by $\mu$ and one on which $\mu$ does not terminate  (when, initially, $X\geq 5$).
These other problems exist because the abstraction represented by the non-deterministic problem $P^o$ is too weak:
it captures some properties of the problems in $\cal P$ but misses  properties that should
not be abstracted away. One such property  is that
\emph{if a  positive  integer variable  keeps being decreased, while being increased a finite number of times only, then the value of the  variable will infinitely often reach the zero value.}\footnote{The stronger consequent of eventually reaching the zero value and staying at it also holds for the problems in $\P$. However, we prefer to use this weaker condition as it is enough for our needs.}
This property is true for the  class of problems $\cal P$ but  is not true for the problem $P'$, 
and  crucially, it is not true in the projection $P^o$. Indeed, the projection $P^o$ captures
the local (Markovian) properties of the problems in $\cal P$ but  not the  \emph{global} properties.
Global properties involve not just transitions but   \emph{trajectories.}


We will thus enrich the language of the abstract problems by considering \emph{trajectory constraints} $C$, i.e.,
restrictions on the set of possible observation or state \emph{trajectories}. 
The constraint $C_X$ that a non-negative variable $X$ decreased infinitely often and increased finitely often,
will infitenly often reach the value $X=0$ is an example of a trajectory constraint.
Another example is the fairness constraint $C_F$; namely, that  infinite occurrences of a non-deterministic  action
in a state imply infinite occurrences of each one of its possible outcomes following the action
in the same state. 

%


\subsection{Strong Generalization over Projections}


Formally, a \emph{trajectory constraint $C$ over a PONDP $P$} is a set of infinite state-action sequences over $P$ or a set of infinite observation-action sequences over $P$.
A trajectory $\tau$ \emph{satisfies} $C$ if either a) $\tau$ is finite, or b)
if $C$ is a set of state-action sequences then $\tau \in C$, and if $C$ is a set of observation-action sequences then $obs(\tau) \in C$. 
Thus trajectory constraints only constrain infinite sequences.

A PONDP $P$ extended with a set of trajectory constraints $C$, written $P/C$, is called a  \emph{PONDP with constraints}. 
Solutions of $P/C$ are defined as follows:

\begin{definition}[Solution of PONDP with constraints]
A policy $\mu$ solves $P/C$ iff $\mu$ is valid for $P$ and every $\mu$-trajectory of $P$ that satisfies $C$ is goal reaching.
\label{def:P/C}
\end{definition}

This notion of solution does not assume fairness as before (as in strong cyclic solutions);  instead,
it requires that the goal be reached along all the trajectories that satisfy the constraints.
Still if $C=C_F$ is the fairness constraint above, a \emph{fair} solution to $P$ is nothing else than
a policy $\mu$ that solves $P$ extended with $C_F$; i.e.\ $P/C_F$. When $C$ contains all trajectories, 
the solutions to $P$ and $P/C$ coincide as expected. 

We remark that a trajectory constraint $C$ over the observation projection 
$P^o$ of $\cal P$ is a trajectory constraint over every $P$ in $\cal P$. 
The next theorem says that a policy that solves $P^o/C$ also solves every $P/C$.



\begin{theorem}[Generalization with Constraints]
\label{thm:gen+constraints}
If $P^o$ is the observation {projection} of a class of problems $\mathcal{P}$
and $C$ is a trajectory constraint over $P^o$, then a policy that solves $P^o/C$ also 
solves $P/C$ for all $P \in \mathcal{P}$. 
\end{theorem}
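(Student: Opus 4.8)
The plan is to derive the theorem directly from Lemma~\ref{lemma:obsprojection} together with Definition~\ref{def:P/C}. Fix $P \in \mathcal{P}$ and a policy $\mu$ that solves $P^o/C$; we must establish the two clauses of Definition~\ref{def:P/C} for $P/C$, namely that $\mu$ is valid for $P$ and that every $\mu$-trajectory of $P$ satisfying $C$ is goal reaching. Validity is immediate: since $\mu$ solves $P^o/C$ it is in particular a valid policy for $P^o$, and Lemma~\ref{lemma:obsprojection}(2) then gives that $\mu$ is valid for $P$.

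For the goal-reaching clause, let $\tau=\tup{s_0,a_0,s_1,\ldots}$ be a $\mu$-trajectory of $P$ that satisfies $C$. By Lemma~\ref{lemma:obsprojection}(3), $obs(\tau)$ is a $\mu$-trajectory of $P^o$. The one step that needs care is checking that $obs(\tau)$, now regarded as a trajectory of the abstract problem $P^o$, also satisfies $C$. Recall (from the remark preceding the theorem) that $C$ is a set of sequences over $\Omega=S^o$, so it is simultaneously a trajectory constraint over $P^o$ and, of the observation-action kind, over every member of $\mathcal{P}$. If $\tau$ is finite, so is $obs(\tau)$, and finite trajectories satisfy $C$ by definition. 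If $\tau$ is infinite, then ``$\tau$ satisfies $C$'' means precisely $obs(\tau)\in C$; and since $P^o$ is fully observable its observation function is the identity, so the condition for $obs(\tau)$ to satisfy $C$ in $P^o$ is again just $obs(\tau)\in C$. Hence $obs(\tau)$ is a $\mu$-trajectory of $P^o$ satisfying $C$.

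Now invoke the hypothesis that $\mu$ solves $P^o/C$: every $\mu$-trajectory of $P^o$ satisfying $C$ is goal reaching, so $obs(\tau)$ reaches a state in $T^o=T_\Omega$. Finally, Lemma~\ref{lemma:obsprojection}(1) transfers this conclusion back down, yielding that $\tau$ is goal reaching in $P$. As $\tau$ was an arbitrary $C$-satisfying $\mu$-trajectory of $P$, and $P$ an arbitrary member of $\mathcal{P}$, the theorem follows.

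As the text itself anticipates, there is no deep obstacle here — the statement is essentially a consequence of the definitions. The only point demanding attention is the alignment of the two readings of ``satisfies $C$'': the constraint lives at the observation level, it is interpreted over each concrete $P$ through $obs$ and over the abstract $P^o$ through its trivial observation function, and one must see that these agree along any trajectory $\tau$ and its image $obs(\tau)$. This is also the conceptual reason the theorem is phrased for constraints over $P^o$ rather than for state-level constraints over the individual problems: only observation-level constraints carry a meaning common to all of $\mathcal{P}$.
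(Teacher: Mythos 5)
Your proof is correct and follows essentially the same route as the paper's: validity via Lemma~\ref{lemma:obsprojection}(2), then lifting a $C$-satisfying $\mu$-trajectory $\tau$ of $P$ to the $\mu$-trajectory $obs(\tau)$ of $P^o$ via part (3), invoking the hypothesis on $P^o/C$, and transferring goal reachability back with part (1). Your extra care in aligning the two readings of ``satisfies $C$'' (including the finite case) only makes explicit what the paper's proof treats as immediate.
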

\if\withproofs0
\begin{proof}
Suppose that $\mu$ solves $P^o/C$. Let $P$ be a problem in $\mathcal{P}$. 
By Lemma~\ref{lemma:obsprojection} (2), $\mu$ is a valid policy for $P\in\mathcal{P}$.
To see that $\mu$ solves $P/C$, take a trajectory $\tau$ of $P$ that satisfies $C$,
i.e., $obs(\tau) \in C$. 
By Lemma~\ref{lemma:obsprojection} (3), $obs(\tau)$ is a $\mu$-trajectory of $P^o$.
By assumption $obs(\tau)$ is goal reaching.
By Lemma~\ref{lemma:obsprojection} (1), $\tau$ is goal reaching.
\end{proof}
\fi

Say that \emph{$P$ satisfies the constraint $C$} if all infinite trajectories in $P$ satisfy $C$.
Then, an easy corollary is that:


\begin{corollary}
\label{cor:gen+constraints:1}
Let $P^o$ be the observation projection of a class $\cal P$ of PONDPs, let $C$ be a trajectory constraint over $P^o$,
and let $P$ be a problem in $\mathcal{P}$.
If $P$ satisfies $C$, then a policy that solves $P^o/C$ also solves $P$.
\end{corollary}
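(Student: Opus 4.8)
The plan is to derive Corollary~\ref{cor:gen+constraints:1} directly from Theorem~\ref{thm:gen+constraints} together with the hypothesis that $P$ satisfies $C$. First I would invoke Theorem~\ref{thm:gen+constraints}: since $\mu$ solves $P^o/C$ and $P \in \mathcal{P}$, the theorem tells us that $\mu$ solves $P/C$. It then remains to upgrade ``$\mu$ solves $P/C$'' to ``$\mu$ solves $P$''. By Definition~\ref{def:P/C}, $\mu$ solving $P/C$ means $\mu$ is valid for $P$ and every $\mu$-trajectory of $P$ that satisfies $C$ is goal reaching; by the definition of ``solves'' (for plain PONDPs) I need every $\mu$-trajectory of $P$ to be goal reaching, with no side condition.

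The key step is to observe that the side condition is vacuous here. Take an arbitrary $\mu$-trajectory $\tau$ of $P$. If $\tau$ is finite, then $\tau$ satisfies $C$ by definition (trajectory constraints only constrain infinite sequences), so $\tau$ is goal reaching by the $P/C$ guarantee. If $\tau$ is infinite, then since $P$ satisfies $C$ — meaning all infinite trajectories of $P$ satisfy $C$ — we again get that $\tau$ satisfies $C$, hence $\tau$ is goal reaching. Either way $\tau$ is goal reaching, so every $\mu$-trajectory of $P$ is goal reaching, and since $\mu$ is valid for $P$, this is exactly the statement that $\mu$ solves $P$.

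There is essentially no obstacle: the whole content is the bookkeeping observation that ``$P$ satisfies $C$'' plus ``finite trajectories automatically satisfy $C$'' together cover all $\mu$-trajectories, so the conditional guarantee of Theorem~\ref{thm:gen+constraints} becomes unconditional. If I wanted to be careful about one subtle point, it would be making sure the quantification over trajectories matches: the definition of $P$ satisfying $C$ ranges over all infinite trajectories of $P$, which in particular includes all infinite $\mu$-trajectories, so no mismatch arises. I would write the proof in three or four lines, citing Theorem~\ref{thm:gen+constraints}, Definition~\ref{def:P/C}, and the convention that finite trajectories satisfy every constraint.
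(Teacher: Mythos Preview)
Your proposal is correct and follows essentially the same approach as the paper: invoke Theorem~\ref{thm:gen+constraints} to get that $\mu$ solves $P/C$, then use Definition~\ref{def:P/C} together with the hypothesis that $P$ satisfies $C$ to remove the side condition. Your version simply spells out in more detail (via the finite/infinite case split) what the paper compresses into a single sentence.
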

\if\withproofs1
\begin{proof}
Use Theorem~\ref{thm:gen+constraints} and the simple fact (that follows from Definition~\ref{def:P/C})
that if $\mu$ solves $P/C$ and every trajectory of $P$ satisfies $C$ then $\mu$ solves $P$.
\end{proof}
\fi

More generally, if $C$ and $C'$ are constraints over $P$, say that $C$ \emph{implies} $C'$ if every infinite trajectory over $P$ 
that satisfies $C$ also satisfies $C'$. We then obtain that:

\begin{corollary}
\label{cor:gen+constraints:2}
Let $P^o$ be the observation projection of a class $\cal P$ of PONDPs, and let $C'$ be a trajectory constraint over $P^o$.
If $P$ is a problem in $\cal P$, $C$ is a constraint over $P$, and $C$ implies $C'$,
then a policy that solves $P^o/C'$ also solves $P/C$.
\end{corollary}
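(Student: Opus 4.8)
The plan is to derive this corollary purely from Theorem~\ref{thm:gen+constraints} together with the definition of when one constraint \emph{implies} another, so that no new argument about trajectories is actually needed. First I would record the bookkeeping point already noted in the text: since $C'$ is a trajectory constraint over the observation projection $P^o$ of $\cal P$, it is also a trajectory constraint over every $P \in \cal P$. Hence $P/C'$ is a well-formed PONDP with constraints, and Theorem~\ref{thm:gen+constraints} applies to it.

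Now suppose $\mu$ solves $P^o/C'$. By Theorem~\ref{thm:gen+constraints}, $\mu$ solves $P/C'$; in particular $\mu$ is a valid policy for $P$, which is the first requirement in Definition~\ref{def:P/C} for $\mu$ to solve $P/C$. For the second requirement, take an arbitrary $\mu$-trajectory $\tau$ of $P$ that satisfies $C$; I must show $\tau$ is goal reaching. The one substantive step is to observe that any such $\tau$ also satisfies $C'$: if $\tau$ is finite it satisfies $C'$ by the convention that trajectory constraints constrain only infinite sequences; and if $\tau$ is infinite then, since $\tau$ is an infinite trajectory of $P$ satisfying $C$ and $C$ implies $C'$, $\tau$ satisfies $C'$ as well. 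In either case $\tau$ is a $\mu$-trajectory of $P$ satisfying $C'$, so because $\mu$ solves $P/C'$ it is goal reaching. Therefore $\mu$ solves $P/C$.

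I do not expect a genuine obstacle here — the proof is a two-line chain once Theorem~\ref{thm:gen+constraints} is in hand, and it subsumes Corollary~\ref{cor:gen+constraints:1} as a special case. The only points that warrant a little care are that the relation ``$C$ implies $C'$'' is defined only over \emph{infinite} trajectories, so the finite case of a $\mu$-trajectory must be handled separately via the vacuous-satisfaction convention; and that $C$ may be a set of state-action sequences while $C'$, inherited from $P^o$, is a set of observation-action sequences, which is precisely why ``satisfies'' is phrased through $obs(\tau)$ in the observation-action case and why treating $C'$ as a constraint over $P$ is unambiguous.
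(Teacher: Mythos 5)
Your proposal is correct and follows essentially the same route as the paper's own (unpublished) proof: apply Theorem~\ref{thm:gen+constraints} to conclude that $\mu$ solves $P/C'$, then use the fact, immediate from Definition~\ref{def:P/C} and the definition of ``implies'' (with finite trajectories satisfying every constraint by convention), that a policy solving $P/C'$ also solves $P/C$ whenever $C$ implies $C'$. Your extra remarks on the finite-trajectory case and on $C'$ being a constraint over every $P \in \cal P$ are just careful spellings-out of what the paper leaves implicit.
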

\if\withproofs1
\begin{proof}
Use Theorem~\ref{thm:gen+constraints} and the simple fact (that follows from Definition~\ref{def:P/C})
that if $\mu$ solves $P/C'$ and every trajectory satisfying $C$ satisfies $C'$, then $\mu$ solves $P/C$.
\end{proof}
\fi

Trajectory constraints in $P^o$ are indeed powerful and can be used to account for all the solutions of a class $\cal P$.

\begin{theorem}[Completeness] 
\label{thm:completeness}
If $P^o$ is the observation projection of a class of problems $\cal P$, and $\mu$ is a policy that solves all the problems in $\cal P$,
then there is a constraint $C$ {over $P^o$} such that $\mu$ solves $P^o/C$. 
\end{theorem}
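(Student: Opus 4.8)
The goal is to exhibit a single trajectory constraint $C$ over $P^o$ whose satisfying trajectories are exactly the "good" ones, so that $\mu$ vacuously solves $P^o/C$. The natural candidate is to let $C$ be the set of all infinite observation-action sequences $\sigma$ over $P^o$ such that, if $\sigma$ is a $\mu$-trajectory of $P^o$ (equivalently, $\sigma$ is generated by $\mu$ from an initial state of $P^o$), then $\sigma$ is goal reaching. Equivalently, and more cleanly: $C = \{\sigma : \sigma$ is an infinite observation-action sequence over $P^o$ that is either not a $\mu$-trajectory of $P^o$, or is goal reaching in $P^o\}$. With this choice, by Definition~\ref{def:P/C} we must check that $\mu$ is valid for $P^o$ and that every $\mu$-trajectory of $P^o$ satisfying $C$ is goal reaching; validity will follow because $\mu$ solves each $P\in\cal P$ and these share $A^o(\omega)=A_\omega$, and the second condition is immediate from the definition of $C$ (an infinite $\mu$-trajectory that satisfies $C$ is goal reaching by construction; a finite $\mu$-trajectory satisfies $C$ trivially and we must still argue it is goal reaching).

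\textbf{Key steps, in order.}
First I would fix notation and observe that $\mu$, being a solution of every $P\in\cal P$, is in particular a valid policy for each such $P$, hence selects only actions in $A_{obs(s)}=A^o(obs(s))$; since the available-action sets of $P^o$ are exactly the $A_\omega$, $\mu$ is valid for $P^o$. Second, I would handle the finite $\mu$-trajectories of $P^o$: if $\sigma=\tup{\omega_0,a_0,\ldots,\omega_n}$ is a maximal finite $\mu$-trajectory of $P^o$, I need $\omega_n\in T^o=T_\Omega$. Here I lift $\sigma$ to a concrete trajectory: by the construction of $P^o$ (clauses H2 and H6 in Definition~\ref{def:o-reduction}) each transition $\omega_i \xrightarrow{a_i}\omega_{i+1}$ is witnessed by some $P_i\in\cal P$ and states $s_i,s_i'$ with $obs(s_i)=\omega_i$, $obs(s_i')=\omega_{i+1}$; the subtlety is that these witnesses may come from different problems, so I cannot directly string them into one trajectory. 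Third, to finish the finite case, I would argue that $\mu(\omega_0\cdots\omega_n)=\bot$ (maximality in $P^o$), and then, using the lifting for the single transition out of $\omega_n$ — if $\omega_n\notin T^o$ and $A_{\omega_n}\neq\emptyset$ with some applicable action having a successor observation, there would be a longer $\mu$-trajectory, contradiction — conclude that either $\omega_n\in T_\Omega$ or $\omega_n$ is a genuine dead end in $P^o$ with no applicable action; but a genuine dead end would, via a witnessing $P\in\cal P$, yield a non-goal-reaching $\mu$-trajectory in that $P$, contradicting that $\mu$ solves $P$. Fourth, with finite $\mu$-trajectories shown goal reaching, membership of every $\mu$-trajectory (finite or infinite) in $C$ that we actually need is only used to get goal-reaching; so the theorem follows from Definition~\ref{def:P/C}.

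\textbf{Main obstacle.}
The crux is the finite-trajectory case and the "different witnesses" issue: I must rule out that $\mu$ has a finite, maximal, non-goal-reaching $\mu$-trajectory in $P^o$. The clean way is to reduce to a concrete problem: take such a $\sigma=\tup{\omega_0,a_0,\ldots,\omega_n}$ in $P^o$ with $\omega_n\notin T_\Omega$, and show there is a single $P\in\cal P$ realising this whole observation sequence as (the observation image of) a $\mu$-trajectory — then Lemma~\ref{lemma:obsprojection}(1),(3) give a non-goal-reaching $\mu$-trajectory of $P$, contradiction. If $\cal P$ is not known to be closed under the finite "gluing" of witnesses, I instead run the argument locally: the last observation $\omega_n$ is either a goal observation, or $\mu(\omega_0\cdots\omega_n)=\bot$ with $A_{\omega_n}$ possibly nonempty; in the latter case pick any $P\in\cal P$ containing a state $s$ with $obs(s)=\omega_n$ reachable under $\mu$ along some trajectory projecting to a prefix-compatible sequence — existence of such a $P$ and state is exactly what clauses H2, H6 guarantee step by step — and derive that $\mu$ gets stuck short of the goal in $P$. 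Making this local-to-global passage fully rigorous (ensuring a single $P$ witnesses enough of the trajectory, or that "stuck at $\omega_n$" already contradicts $\mu$ solving some single $P$) is the delicate part; everything else is bookkeeping with Definitions~\ref{def:o-reduction} and~\ref{def:P/C} and Lemma~\ref{lemma:obsprojection}.
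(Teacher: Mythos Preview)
Your choice of $C$ is not the paper's, and the difference is what makes the argument go through. The paper takes $C$ to be the set of observation sequences $obs(\tau)$ where $\tau$ ranges over \emph{all trajectories of all} $P\in\cal P$. With that choice, a $\mu$-trajectory $\sigma$ of $P^o$ that satisfies $C$ comes, \emph{by the very definition of $C$}, equipped with a concrete trajectory $\tau'$ in a single $P\in\cal P$ with $obs(\tau')=\sigma$. One then checks directly that $\tau'$ is a $\mu$-trajectory of $P$ (it is generated by $\mu$ because $\sigma=obs(\tau')$ is, and maximality transfers since $\mu(obs(s_0)\cdots obs(s_n))=\bot$ reads the same in $P$ and in $P^o$); hence $\tau'$ is goal-reaching by hypothesis, and so is $\sigma$ by observability of goals. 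The global lifting you flag as the ``main obstacle'' is thus packaged into the constraint itself: membership in $C$ \emph{is} the existence of a single witnessing concrete trajectory, in one problem, in one piece.

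Your $C$, by contrast, is engineered so that the infinite case holds by fiat, which pushes all the content into showing that every \emph{finite} $\mu$-trajectory of $P^o$ is goal-reaching---and there your argument does not close. The step ``pick any $P\in\cal P$ containing a state $s$ with $obs(s)=\omega_n$ reachable under $\mu$ along some trajectory projecting to a prefix-compatible sequence'' presupposes exactly the lifting you need; clauses H2 and H6 of Definition~\ref{def:o-reduction} furnish only local, per-transition witnesses, possibly from different problems, and nothing in the framework lets you glue them into a single $P$. For a history-dependent $\mu$, a finite $\mu$-trajectory of $P^o$ can traverse transitions contributed by distinct members of $\cal P$ and terminate (with $\mu=\bot$) at a non-goal observation along a history that no individual $P$ ever produces, so your local fallback has nothing to contradict. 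The paper's choice of $C$ sidesteps this entirely: rather than reconstructing a concrete witness after the fact, it makes the witness part of what ``satisfying $C$'' means.
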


\begin{proof}[Sketch]
Define $C$ to be the set of observation sequences of the form $obs(\tau)$ where $\tau$ varies over all the trajectories of all $P \in \cal P$. 
Suppose $\mu$ solves every $P$ in $\cal P$.  First, one can show that $\mu$ is valid in $P^o$ using the assumption that action preconditions are observable (i.e., $A^o(\omega) = A_{\omega} = A(s)$ for all $s$ such that $obs(s) = \omega$). Second, to see that $\mu$ solves $P^o/C$ take a $\mu$-trajectory $\tau$ of $P^o$ 
that satisfies $C$. By definition of $C$, there is a trajectory $\tau'$ in some  $P \in \cal P$ such that $obs(\tau') = \tau$. It is not hard to see that $\tau'$ is a $\mu$-trajectory of $P$. Thus, since $\tau'$ is goal-reaching and goals are uniformly observable (Definition~\ref{dfn:class}),  then $\tau$ is goal reaching.
\end{proof}

\if\withproofs1
\begin{proof}

Define $C$ to be the set of observation sequences of the form $obs(\tau)$ where $\tau$ varies over all the trajectories of all $P \in \cal P$. 
Suppose $\mu$ solves every $P$ in $\cal P$. 

First, $\mu$ is valid in $P^o$. Indeed, let $\tau = \tup{w_0,a_0, \cdots}$ be a trajectory of $P^o$, 
and let $x_i = \tup{w_0, w_1, \cdots, w_i}$. Suppose $\mu(x_i)$ is defined. We must show that $\mu(x_i) \in A^o(w_i)$. By the assumption that action-preconditions are uniformly observable (Definition~\ref{dfn:class}), we have that $A(s) = A_{obs(s)}$ for every $s$ in every $P \in \P$. Thus, by H5, $A^o(w_i) = A_{w_i}$. 
Take any trajectory $\tau' = \tup{s_0, a_0, \cdots}$ in any $P \in \cal P$ such that
$obs(\tau') = \tau$ (such a $\tau'$ exists by definition of $C$). Since $\mu$ is valid in $P$, and $\mu(x_i)$ is defined, $\mu(x_i) \in A(s_i)$. 
But $A(s_i) = A_{obs(s_i)} = A_{w_i}$.

Second, $\mu$ solves $P^o/C$. To see this, take a $\mu$-trajectory $\tau = \tup{w_0,a_0, \dots}$ of $P^o$ that satisfies $C$. By definition of $C$, there is a trajectory $\tau'$ in some  $P \in \cal P$ such that $obs(\tau') = \tau$. It is sufficient to show that $\tau'$ is a $\mu$-trajectory (for then it is goal reaching, and thus so is $\tau$).

To see that $\tau'$ is generated by $\mu$ simply note that $\mu(obs(s_0) \dots obs(s_n)) = a_n$ (since $\mu$ generates $\tau$). To see that $\tau'$ is a maximal trajectory in $P$ generated by $\mu$, suppose it were finite, say $\tau' = \tup{s_0,a_0, \cdots, s_{n-1},a_{n-1},s_n}$. Then $\tau = \tup{obs(s_0),a_0, \cdots,  obs(s_{n-1}),a_{n-1}, obs(s_n)}$ is also finite. But since $\tau$ is a maximal trajectory of $P^o$ generated by $\mu$, we have that $\mu(obs(s_0) \cdots obs(s_n)) = \bot$, as required.
\end{proof}
\fi

\medskip\noindent\textbf{Example (continued).}
The problems $P$  above with  integer variable $X$
and actions $Inc$ and $Dec$ and goal $X=0$, are all solved by the policy $\mu$ 
``if $X > 0$, do $Dec$''. However, the observation projection $P^o$ is not solved
by $\mu$ as there are trajectories where the outcome of the action $Dec$ in $P^o$, with non-deterministic
effects $X > 0 \,|\, X=0$, is always $X > 0$. \citeay{bonet:ijcai2015} deal with this by taking  $\mu$
as a  \emph{fair} solution to $P$ and then proving termination of $\mu$ in  $P$
(a similar approach is used by \citeay{srivastava:aaai2011}). The theorem and corollary  above provide an alternative.
The policy $\mu$ does not solve $P^o$ but solves $P^o$ with constraint $C_X$, 
where $C_X$ is the trajectory constraint that
states that \emph{if the action $Dec$ is done infinitely often and the action $Inc$ is done finitely often then infinitely often $X=0$}.
Theorem~\ref{thm:gen+constraints}  implies that $\mu$  solves $P/C_X$ for every $P$ in the class. 
Yet, since $P$ satisfies $C_X$, Corollary~\ref{cor:gen+constraints:1} implies that $\mu$ must solve $P$ too.
The generalization also applies to problems $P$ where increases and decreases in the variable $X$ are  \emph{non-deterministic}
as long as no decrease can  make $X$ negative. In such a case, if $C_F$ is the trajectory constraint over $P$ that says that non-deterministic
actions are \emph{fair}, from the fact that $C_F$ implies $C_X$ in any such problem  $P$, Corollary~\ref{cor:gen+constraints:2} implies that a policy $\mu$ that
solves $P^o/C_X$ must also solve $P/C_F$; i.e., the strong solution to the abstraction $P^o$ over the constraint $C_X$,
represents a \emph{fair} solution to such non-deterministic problems $P$. \qed

\Omit{
\subsection{Strong Generalization for Reductions}

\sr{which (if any) of these ``generalise'' B\&G?}

Theorem~\ref{thm:gen+constraints} can be generalized by considering reductions into problems that are not  observation projections.
For this, if the function $h$ reduces a problem $P$ to $P'$, we use the notation $h(C)$ to ``lift'' the constraint $C$
over the state trajectories over $P$ to a constraint over state trajectories over $P'$. For this, $h$ maps the
trajectories $\tup{s_0,a_0,s_1, \ldots}$ into $\tup{h(s_0),a_0,h(s_1),\ldots}$. For observation constraints, $h(C)=C$. 


\Omit{\begin{theorem}[Reductions with Constraints]
  If the function $h$ reduces $P$ to $P'$, and $C$ is a trajectory constraint over  $P$,
  then a policy that solves $P'/h(C)$, solves $P/C$.
  \label{hm:gc3}
\end{theorem}
}
\sr{After discussing with GdG, we rephrase this theorem:}
\begin{theorem}[Reductions with Constraints]
Let $P'/C'$ be a PONDP with constraint. Let $\cal P$ be a set of PONDPs with constraints, say $P_i/C_i$ for $i \in I$. 
Suppose, for every $i \in I$, that the function $h_i$ reduces $P_i$ to $P'$ and that $h_i(C_i) \subseteq C'$. 
Then, a policy that solves $P'/C'$ also solves $P_i/C_i$ for every $i \in I$.
\label{hm:gc3}
\end{theorem}
\if\withproofs1
\begin{proof} \sr{rephrase proof with new notation}
Let $\mu$ be a policy solving $P'/C'$ and fix $i \in I$. By Lemma~\ref{lemma:obsprojection}, $\mu$ is a valid policy for $P_i$. Let $\tau$ be a $\mu$-trajectory of $P_i$ that satisfies $C_i$. Then, by Lemma~\ref{lemma:obsprojection}, $h(\tau)$ is a $\mu$-trajectory of $P'$ that satisfies $h(C_i)$ and hence also $C'$. Since $\mu$ solves $P'/C'$, $h(\tau)$ is goal reaching. By Lemma~\ref{lemma:obsprojection}, also $\tau$ is goal-reaching.
\end{proof}
\fi

\bb{Do we need some uniformity assumption for above theorem?}

\Omit{
Similar corollaries follow; in particular, 

\begin{corollary}
   If the function $h$ reduces $P$ to $P'$, and $C$ is a trajectory constraint over $P$ that is satisfied by $P$, 
   then a policy that solves $P'/h(C)$, solves $P$. 
  \label{cor:gc4}
\end{corollary}
\if\withproofs1
\nosuchproof
\fi

\begin{theorem}
 If a policy $\mu$ solves a PONDP $P$, then there is 
 a trajectory constraint $C$ on $P^o$ such that $\mu$ solves $P^o/C$. 
\end{theorem}
\begin{proof}
Let $X$ be the set of trajectories of $P$ and $C=\{obs(\tau) : \tau \in X\}$
be the set of observation sequences associated to $X$.
Let $\tau'= \tup{\omega_0,a_0,\omega_1,a_1,\cdots}$ be a $\mu$-trajectory of $P^o$ satisfying $C$.
We need to show that $\tau'$ is goal reaching.
By definition of $C$, $\tau'=obs(\tau)$ for some $\tau \in X$.
By definition of being a $\mu$-trajectory, $\tau$ is also a $\mu$-trajectory.
Since $\mu$ solves $P$, $\tau$ reaches a goal state $s$.
Therefore, $\tau'=obs(\tau)$ reaches the observation $\omega=obs(s)$ that
is contained in $T^o$ by the definition of $P^o$.
Therefore, $\tau'$ is goal reaching.
\end{proof}

%

\hg{this universal constraint $C_S$ given by  the trajectories that are possible in some collection of problems $S$ makes the projection $P_o$ somewhat
redundant, in the sense that the trajectories compatible with $C_S$ are a subset of the trajectories that are possible in $P_o$. The proof is good but this should be discussed.
In general, we don't need this ``universal'' $C_S$, but just those that are needed to make $P_o$ ``complete''.}
}

}

\section{Generalized Planning as LTL Synthesis} \label{sec:LTL}



Suppose that we are in the condition of Theorem~\ref{thm:gen+constraints}. That is we have a class of problems $\cal P$ whose observation projection is $P^o$, and  a constraint $C$ over $P^o$. 
Let's further assume that $C$ is expressible in Linear-time Temporal Logic (LTL).  
Then we can provide an actual policy that solves $P^o/C$, and hence solves $P/C$ for all $P \in {\cal P}$. We show how in this section.



For convenience, we define the syntax and semantics of LTL over a set $\Sigma$ of alphabet symbols (rather than a set of atomic propositions).
The syntax of LTL is defined by the following grammar: 
$\Psi ::= 
\true \mid 
l \mid 
\Psi \wedge \Psi \mid
\neg \Psi \mid
\nextX \Psi \mid
\Psi \until \Psi$, where $l \in \Sigma$.
We denote infinite strings $\alpha \in \Sigma^\omega$ by $\alpha = \alpha_0 \alpha_1 \cdots$, and write $\alpha_{\geq j} = \alpha_j \alpha_{j+1} \dots$. 
The semantics of LTL, $\alpha \models \Psi$, is defined inductively as follows: $\alpha \models \true$; $\alpha \models l$ iff $\alpha_0 = l$; $\alpha \models \Psi_1 \wedge \Psi_2$ iff $\alpha \models \Psi_i$, for $i = 1,2$; $\alpha \models \neg \Psi$ iff $\alpha \not \models \Psi$; $\alpha \models \nextX \Psi$ iff $\alpha_{\geq 1} \models \Psi$; and $\alpha \models \Psi_1 \until \Psi_2$ iff there exists $j$ such that $\alpha_{\geq j} \models \Psi_2$ and for all $i < j$ we have that $\alpha_{\geq i} \models \Psi_1$. We use the usual shorthands, e.g., $\eventually \Phi$ for $\true \until \Phi$ (read ``eventually'') and $\always \Phi$ for $\neg \eventually \neg \Phi$ (read ``always''). Define 
$mod(\Psi) = \{\alpha \in \Sigma^\omega : \alpha \models \Psi\}$. 

Assume that the trajectory constraint $C$ is \emph{expressed} as an LTL formula $\Psi$, i.e., $\Sigma = \Omega \cup Act$ and $mod(\Psi) = C$.
%
Let  $\Phi \doteq \Psi \supset \eventually T^o$ where $\eventually T^o$ is the reachability goal of $P^o$ expressed in LTL.
To build a policy solving $P^o/C$ proceed as follows. The idea is to think of policies
$\mu:\Omega^+ \to Act$ as ($\Omega$-branching $Act$-labeled)
trees, and to build a tree-automaton accepting those 
policies such that every $\mu$-trajectory satisfies the formula
$\Phi$. Here are the steps: 

\begin{enumerate}[1.]\denselist
\item Build a nondeterministic B\"uchi automaton $A_b$ for the formula $\Phi$ (exponential in $\Phi$)~\cite{VW94}.
\item Determinize $A_b$ to obtain a deterministic parity word automaton (DPW) $A_d$ that accepts 
the models of $\Phi$ (exponential in $A_b$, and produces linearly many priorities)~\cite{Piterman07}. 
An infinite word $\alpha$ is \emph{accepted} by a DPW $A$ iff the largest priority of the states visited infinitely often by $\alpha$ is even.
\item Build a deterministic parity tree automaton $A_t$ that accepts a policy $\mu$ iff every $\mu$-trajectory satisfies $\Phi$ (polynomial in $A_d$ and $P^o$, and no change in the number of priorities). This 
 can be done by simulating $P^o$ and $A_d$ as follows: from state $(s,q)$ (of the product of $P$ and $A_d$) and reading action $a$, launch for each $s' \in F(a,s)$ a copy of the automaton in state $(s',q')$ in direction $s'$ where $q'$ is the updated state of $A_d$.
\item Test $A_t$ for non-emptiness (polynomial in $A_t$ and exponential in the number of priorities of $A_t$)~\cite{Zielonka98}.
\end{enumerate}
This yields the following complexity (the lower-bound is inherited from \citeay{PR1989b}):

\begin{theorem}
Let $P^o/C$ be the observation projection  with trajectory constraint $C$ expressed as the LTL formula $\Psi$. Then solving $P^o/C$ (and hence all  $P/C$ with $P \in {\cal P}$) is \text{2EXPTIME}-complete. In particular, it is double-exponential in $|\Psi|+|T^o|$ and polynomial in $|P^o|$.
\end{theorem}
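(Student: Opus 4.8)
The plan is to prove the upper bound by verifying correctness and analyzing the complexity of the four-step pipeline already sketched, and the lower bound by a reduction from LTL realizability. Throughout, ``solving $P^o/C$'' is read as the decision problem of whether some policy solves $P^o/C$ (the non-emptiness test of Step~4 also returns a finite-state witness), and the claim that such a policy ``hence'' solves all $P/C$ with $P\in\cal P$ is exactly Theorem~\ref{thm:gen+constraints}.

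For the upper bound, the first task is to pin down the correctness of Step~3, i.e.\ that the deterministic parity tree automaton $A_t$ accepts (codes of) exactly those policies that solve $P^o/C$. The bridging fact is: a valid policy $\mu$ solves $P^o/C$ iff every $\mu$-trajectory of $P^o$ satisfies $\Phi\doteq\Psi\supset\eventually T^o$. Indeed, by Definition~\ref{def:P/C}, $\mu$ solves $P^o/C$ iff every $\mu$-trajectory satisfying $C=mod(\Psi)$ is goal reaching; for an infinite $\mu$-trajectory $\tau$ (which in $P^o$ is already an observation--action sequence over $\Omega\cup Act$), the implication ``$\tau\models\Psi$ $\Rightarrow$ $\tau$ reaches $T^o$'' is literally ``$\tau\models\Phi$''. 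Finite $\mu$-trajectories arise only when $\mu$ is undefined at the last state; I would dispose of these by restricting without loss of generality to policies that are total on all reachable non-goal histories and by adding self-loops at goal states, so that the only infinite branch produced from a goal-reaching finite trajectory trivially satisfies $\eventually T^o$ (this restriction is harmless for realizability). With this in place, the product construction of Step~3 — from state $(s,q)$ and chosen action $a$ launch one subtree per $s'\in F^o(a,s)$ with $q$ updated by the deterministic $A_d$, and reject whenever $a\notin A^o(s)$ — exactly encodes ``valid policy all of whose branches lie in $mod(\Phi)$,'' since $A_d$ is a deterministic parity word automaton for $\Phi$ run along each branch.

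The complexity accounting is then routine given the cited per-step bounds. Writing $\eventually T^o$ as $\eventually\bigvee_{\omega\in T^o}\omega$ gives $|\Phi|=O(|\Psi|+|T^o|)$. Step~1 yields an NBW with $2^{O(|\Phi|)}$ states~\cite{VW94}; Step~2 yields a DPW $A_d$ with $2^{2^{O(|\Phi|)}}$ states and only $2^{O(|\Phi|)}$ priorities (linearly many in $|A_b|$)~\cite{Piterman07}; Step~3 multiplies the state count by $O(|P^o|)$ and leaves the priorities unchanged, so $|A_t|=2^{2^{O(|\Phi|)}}\cdot\mathrm{poly}(|P^o|)$; Step~4 runs in time polynomial in $|A_t|$ and exponential in the $2^{O(|\Phi|)}$ priorities~\cite{Zielonka98}, hence in time $2^{2^{O(|\Phi|)}}\cdot\mathrm{poly}(|P^o|)$. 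Thus the whole procedure is doubly exponential in $|\Psi|+|T^o|$ and polynomial in $|P^o|$, and in particular in \text{2EXPTIME}.

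For the matching lower bound I would reduce from LTL realizability, which is \text{2EXPTIME}-hard~\cite{PR1989b}: given $\varphi$ over input variables $\mathcal I$ and output variables $\mathcal O$, let $P^o$ have $S^o=\Omega=2^{\mathcal I}$, $Act=2^{\mathcal O}$, every action available everywhere ($A^o(\omega)=Act$), no goal states ($T^o=\emptyset$), $F^o(a,\omega)=\Omega$ for all $a,\omega$, and $I^o=\Omega$; take $C=mod(\neg\varphi)$, rewriting $\neg\varphi$ over the alphabet $\Omega\cup Act$ to match the $\omega_0a_0\omega_1a_1\cdots$ interleaving. Since $T^o=\emptyset$, no trajectory is goal reaching, so by Definition~\ref{def:P/C} a policy $\mu$ solves $P^o/C$ iff no $\mu$-trajectory satisfies $C$, i.e.\ every $\mu$-trajectory satisfies $\varphi$ — and a solution, being total on reachable histories, is exactly a strategy that responds to each input with an output realizing $\varphi$. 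All of $|\Psi|=O(|\varphi|)$, $|P^o|$, $|T^o|$ are polynomial, so this is a polynomial-time reduction, establishing \text{2EXPTIME}-hardness in $|\Psi|$ already for trivial $P^o$. The main obstacle is the correctness bookkeeping of the second paragraph — precisely matching ``every $\mu$-trajectory satisfies $\Phi$'' with ``$\mu$ solves $P^o/C$'', in particular handling finite $\mu$-trajectories and the goal-state looping convention, and confirming that determinization plus the tree-automaton product preserves the claimed priority count — whereas the complexity arithmetic and the reduction are straightforward once the cited ingredients are granted.
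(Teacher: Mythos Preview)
Your proposal is correct and takes exactly the paper's approach: the paper's proof \emph{is} the four-step pipeline you analyze (stated immediately before the theorem) together with the one-line remark that ``the lower-bound is inherited from \citeay{PR1989b}''; you have simply fleshed out the correctness of Step~3, the handling of finite trajectories, and made the hardness reduction explicit. One small caution on that reduction: with $\Omega=2^{\mathcal I}$ and $Act=2^{\mathcal O}$, both the explicit $P^o$ and the rewriting of $\neg\varphi$ over the letter alphabet $\Omega\cup Act$ are in general exponential in $|\varphi|$, so to keep the reduction polynomial-time you should serialize the environment and controller valuations bit-by-bit through $P^o$ (the paper does not spell this out either).
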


This is a worst-case complexity. In practice, the automaton $A_t$ may be small also in the formula $\Phi \doteq \Psi\supset \eventually T^o$.

\begin{figure}[t]
\centering
\includegraphics[scale=.575]{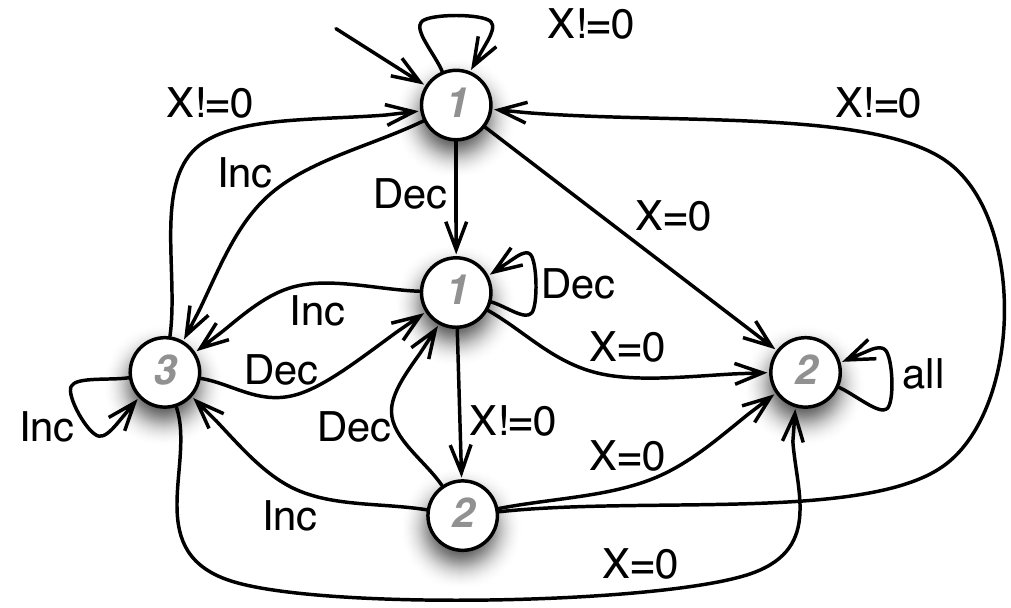}
\caption{DPW (with priorities written in the states) for
$\left[\eventually \always \neg Inc \ \wedge\ \always \eventually Dec \ \supset\   \always \eventually (X = 0) \right]\supset \eventually (X=0)$.
}\label{fig:dpw}
\end{figure}


\medskip\noindent
\textbf{Example (continued).} 
We express $C_X$ in LTL. 
The constraint LTL formula $\Psi_X$ (over alphabet $\{X=0,X \neq 0\} \cup \{Inc,Dec\})$  is
$\label{eq:ltl}
\eventually \always \neg Inc \ \wedge\ \always \eventually Dec \ \supset\   \always \eventually (X = 0)\,.
$
Then, $mod(\Psi_X) = C_X$. 
The resulting formula $\Phi \doteq \Psi_X \supset \eventually (X = 0)$ generates a relatively small automaton: the DPW $A_d$ has 5 states and 3 priorities, see Figure~\ref{fig:dpw}. 
As a further example, consider the case of $N \in
\mathbb{N}$ variables. Each variable $X_i$ has a formula $\Psi_i$
analogous to $\Psi_X$. Consider the constraint $\Psi = \wedge_i
\Psi_i$.  
The above algorithm gives us, as worst-case, DPW for $\Phi \doteq \Psi \supset \eventually (X = 0)$
of size $2^{2^{O(N)}}$ with $2^{O(N)}$ many priorities. However, 
analogously to Figure~\ref{fig:dpw}, there is a DPW of
size $2^{O(N)}$ with $3$ priorities. \qed

%

\medskip



\Omit{
Actually the automata techniques described above can be generalized to handle the conditions of Theorem~\ref{hm:gc3}.

That is, let $\cal P$ be a set of PONDPs with constraints $P_i/C_i$ for $i \in I$,  $P'/C'$ be a  PONDP with constraint, such that that the function $h_i$ reduces $P_i$ to $P'$ and that $h_i(C_i) \subseteq C$. Then we know by Theorem~\ref{hm:gc3} that a policy that solves $P'/C'$ also solves each $P_i/C_i$.
If $C'$ is expressible in LTL, and $P'$ is finite, then we can again provide an actual policy that solves $P'/C'$, and hence solves each $P_i/C_i$. 
 
The idea is again to code policies $\mu:(\Omega_o)^+ \to Act$ as $\Omega_o$-branching $Act$-labeled trees, and to build a tree-automaton accepting those (codes of) policies such that every $\mu$-trajectory  satisfies the formula $\Phi\doteq \Psi\supset \eventually T'$.
Here are the steps:
\begin{enumerate}[1.]\denselist
 \item Build a universal co-Buchi word automaton $A_1$ accepting the observation-action trajectories satisfying $\Phi$ (exponential in $\Phi$)~\cite{VW94}.
 \item Build a universal co-Buchi tree automaton $A_2$ accepting those (codes of) policies $\mu$ such that every $\mu$-trajectory of $P'$ satisfies $\Phi$ (polynomial in $A_1$ and $P'$). This 
 is done by simulating $A_1$ and, for every $s' \in F(a,s)$ launching a copy of $P'$ in state $s'$ and direction $obs(s')$.
\item Check $A_2$ for non-emptiness (exponential in $A_2$)~\cite{??}.
\end{enumerate} 

Hence we conclude the following (the lower-bound is inherited from~\cite{??}):
\begin{theorem}
Let $P'/C'$ be such that the constraint $C'$ is expressible as an LTL formula $\Psi$. Let $\cal P$ be a set of PONDPs with constraints, say $P_i/C_i$ for $i \in I$. 
Suppose, for every $i \in I$, that the function $h_i$ reduces $P_i$ to $P'$ and that $h_i(C) \subseteq C$. 
Then, solving $P'/C'$, and hence every $P_i/C_i$ for $i \in I$, is \text{2EXPTIME}-complete. In particular, it is double-exponential in $|\Psi|+|T|$ and single exponential in 
$|P'|$.
\end{theorem}

}

\section{Qualitative Numerical Problems}

%
%

To concretize, we  consider a simple but broad class of  problems where
\emph{generalized planning} can be reduced to \emph{FOND planning.} 
These problems  have a set  of non-negative \emph{numeric variables},
that do not have to be integer, and standard boolean propositions and actions that can increase or decrease the value of
the numeric variables non-deterministically. The general problem of stacking a block $x$ on a block $y$ in any blocks-world problem,
with any number of blocks in any configuration,
can be cast as a problem of this type.
\Omit{
  with  two variables $N(x)$ and $N(y)$,
representing the number of blocks above $x$ and $y$, and booleans $H(z)$ and $E$ representing the block $z$ being held if any,
and whether the arm is empty or not.  Actions that place blocks above $x$ increase $N(X)$ and those that remove blocks
above $x$ decrease $X$. When both variables are zero, the block $x$ can be picked up and placed on $y$.}
An abstraction of some of these problems appears in \cite{srivastava:aaai2011,srivastava:aaai2015}.

\Omit{
A \emph{qualitative numerical problem} or QNP $R_V$ can be thought as a STRIPS problem
$R=\tup{F,Init,Act,G}$,  extended with a set $V$ of non-negative numerical variables $X$.
In $R$,  $F$ stands for atoms, $Init$ for the atoms that are true initially, $Act$ for
the actions, and $G$ for the goals. In QNPs, the possible initial values for a variable $X$ is given by an initial subset
$Init_V(X)$ of non-negative values, and each action $a$ may involve effects described by expressions of  the form $Inc(X)$ or $Dec(X)$
(but not both) for each variable $X$ in $V$ that mean that the value of $X$ is  increased
or decreased by $a$. In addition, the literals   $X=0$ and $X\neq0$, expressed as  $X>0$,
may appear in action preconditions and in the goal.
The states $s$ associated with a QNP $R_V$ are valuations that assign a truth-value to each
propositional variable in $F$ and a non-negative real value to each variable in $V$.
}

\Omit{
The semantic of the effects $Inc(X)$ and $Dec(X)$ on the value of $X$ for a given state $s$
is given by functions $INC$ and $DEC$ that take a variable $X$, an action $a$, and a state $s$,
and yield a \emph{subset} $INC(X,a,s)$ and $DEC(X,a,s)$ of possible values $x'$ of $X$ in the
successor states. If these ranges contain a single number, the problem is deterministic, else
it is \emph{non-deterministic}.
If $x$ is the value of $X$ in $s$, the only \emph{requirement} is that for all
$x'\in DEC(x,a,s)$, $0\leq x'\leq x$, and that for all $x'\in INC(X,a,s)$,
$x'>x$ when $x=0$ and $x'\geq x$ when $x>0$.\footnote{For simplicity we require
  that $Inc(X)$ effects always increase the value of $X$ when $X=0$. Otherwise, trajectory constraints would be needed also for increments.}
The semantics of the propositional fluents in $R_V$ is given by the standard STRIPS semantics.
}

A \emph{qualitative numerical problem} or QNP  is a tuple $R_V=\langle F,Init,Act,G,V,Init_V,INC,$ $DEC \rangle$
where the first four components define a STRIPS planning problem extended with
atoms $X=0$ and $X>0$ for numerical variables $X$ in a set  $V$,  that may appear as action preconditions  
and goals, sets $Init_V(X)$ of possible initial values for each variable $X\in V$,
and  effect descriptors  $Dec(X)$ and $Inc(X)$, with a semantics given by the functions
$INC$ and $DEC$  that take a variable $X$, an action $a$, and a state $s$,
and yield a \emph{subset} $INC(X,a,s)$ and $DEC(X,a,s)$ of possible values $x'$ of $X$ in the
successor states. If these ranges contain a single number, the problem is deterministic, else
it is \emph{non-deterministic}. If $x$ is the value of $X$ in $s$, the only \emph{requirement} is that for all
$x'\in DEC(x,a,s)$, $0\leq x'\leq x$, and that for all $x'\in INC(X,a,s)$,
$x'>x$ when $x=0$ and $x'\geq x$ when $x>0$.\footnote{$Inc(X)$ effects always increase variable $X$ when $X=0$. Otherwise, trajectory constraints would be needed also for increments.}
All the propositions in $F$ are assumed to be \emph{observable},
while for numerical variables $X$, only the booleans $X=0$ and $X>0$ are observable.

A QNP $R_V=\langle F,Init,Act,G,V,Init_V,INC,$ $DEC \rangle$ represents a PONDP $P_V$
in \emph{syntactic form}. If $s_V$ is a valuation over the variables in $V$, $s_V[X]$
is the value of $X$ in $s_V$, and $b[s_V]$ its boolean projection; i.e.\
$b[s_V][X]=0$ if $s_V[X]=0$ and $b[s_V][X]=1$ otherwise.
The PONDP $P_V$ corresponds to the tuple $\tup{S,I,\Omega,Act,T,A,obs,F}$ where
\begin{enumerate}[1.]\denselist
\item $S$ is the set of valuations $\tup{s_F,s_V}$ over the variables in $F$ and $V$ respectively,
\item $I$ is the  set of pairs $\tup{s_F,s_V}$ where $s_F$ is \emph{uniquely}
  determined by $Init$ and $s_V[X]\in Init_V(X)$ for each $X \in V$,
\item $\Omega=\{\tup{s_F,b[s_V]} : \tup{s_F,s_V}\in S\}$,
\item $Act$ is the finite set of \emph{actions} given,
\item $T \subseteq S$ is the set of states that satisfy the atoms in $G$,
\item $A(s)$ is the set of actions whose precondition are true in $s$, 
\item $obs(s)=\tup{s_F,b[s_V]}$ for $s=\tup{s_F,s_V}$,
\item $F(a,s)$ for $s=\tup{s_F,s_V}$ is the set of states $s'=\tup{s'_F,s'_V}$ where $s'_F$
  is the STRIPS successor of $a$ in $s$, and $s'_V[X]$ is in $INC(X,a,s)$, $DEC(X,a,s)$, or
  $\{s_V[X]\}$ according to whether $a$ features an $Inc(X)$ effect, a $Dec(X)$ effect, or none.
\end{enumerate}

An example  is $R_V=\langle F,Init,Act,G,V,Init_V,INC,$ $DEC\rangle$ where $F$ is empty, $V = \{X,Y\}$, $Init_V(X)=\{20\}$
and $Init_V(Y)=\{30\}$, the goal is $\{X=0,Y=0\}$, and $Act$ contains two actions: an action $a$ with effects $Dec(X)$ and $Inc(Y)$,
and an  action $b$ with effect $Dec(Y)$. The functions $INC$ always increases the value by 1 and $DEC$
decreases the value by 1 except when it is less than 1 when the value is decreased to $0$.
A policy $\mu$ that solves $R_V$ is  ``do $a$ if $X > 0$ and $Y=0$, and do $b$ if $Y > 0$''.
Simple variations of  $R_V$ can be obtained by changing the initial situation, e.g., setting it to $Init_V(X)=[10,20]$ and $Init_V(Y)=[15,30]$,
or the $INC$ and $DEC$ functions. For example, a variable $X$ can be set to increase or decrease any number between $0$ and $1$
as long as values do not become negative. The policy $\mu$ solves  all these variations except for those where  infinite sequences of decrements
fail to drive a variable to zero. We rule out such  QNPs by means of the following \emph{trajectory constraint}:


\begin{definition}[QNP Constraint]
\label{def:qnp:constraint}
The trajectory constraint $C_X$ for a numerical variable $X$ in a QNP excludes the trajectories
that after a given time point contain a) an infinite number of actions with $Dec(X)$ effects,
b) a finite number of actions with $Inc(X)$ effects,  c) no state where $X=0$.
\end{definition}

This is analogous to the LTL constraint in Section~\ref{sec:LTL}.
The set of   constraints $C_X$ for all variables $X \in V$ is denoted as $C_V$.
We  will be interested in solving QNPs $R_V$ \emph{given} that the constraints $C_V$ hold.
Moreover, as  we have seen, there are policies that solve entire
families of \emph{similar} QNPs:

\begin{definition}[Similar QNPs]
\label{def:qnp:similar}
Two QNPs are similar if they only differ in the $INC$, $DEC$ or $Init_V$ functions,
and for each variable $X$, $X=0$ is initially possible in one if it is initially possible in the other,
and the same for $X > 0$.
\end{definition}

We want to obtain policies that solve whole classes of similar QNPs by solving
a \emph{single abstract problem}. For a QNP $R_V$, we define its syntactic projection
as $R_V^o$:\footnote{For simplicity we  use the syntax of STRIPS with negation
  and conditional effects \cite{gazen:adl}.}

\begin{definition}[Syntactic Projection of QNPs]
\label{def:qnp:syntactic-projection}
If $R_V=\tup{F,Init,Act,G,V,Init_V,INC,DEC}$ is a QNP, its
\emph{syntactic projection} is the non-deterministic (boolean) problem
$R_V^o=\tup{F',Init',Act',G}$, where
\begin{enumerate}[1.]\denselist
\item $F'$ is $F$ with new atoms $X=0$ and $X>0$ added for each variable $X$;
  i.e.\ $F'=F\cup\{X=0,X>0: X\in V\}$,
\item  $Init'$ is $Init$ and  $X=0$ (resp.\ $X>0$) true iff $X>0$
  (resp.\ $X=0$) is not initially possible in $Init_V$.
\item $Act'$ is $Act$ but where in each action and for each variable $X$,
  the effect $Inc(X)$ is replaced by the atom $X>0$, and the effect $Dec(X)$
  is replaced by the conditional effect ``if $X>0$ then $X>0\,|\,X=0$''.
\end{enumerate}
\end{definition}
Recalling that $X>0$ is an abbreviation for $X\neq 0$, 
the atoms $X=0$ and $X>0$ are mutually exclusive.
We refer to the action with non-deterministic effects in $R_V^o$ as the $Dec(X)$
actions as such actions have $Dec(X)$ effects in $R_V$. This convention is assumed
when applying $C_V$ constraints to $R^o_V$.  The syntactic projection $R_V^o$ denotes a FONDP that features
multiple initial states when for some variable $X$, both $X=0$ and $X>0$
are possible in $Init_V$: 

\Omit{
Indeed, the number of possible initial states
in this FOND is $2^n$ where $n$ is the number of such variables.
The class $\mathcal{R}_V$ of all QNPs that are similar to $R_V$
enjoy the following property that we express in general terms.

Let $\mathcal{P}$ be a \alert{uniform class} of PONDPs.
We say that a problem $P^*$ in $\mathcal{P}$ is a \emph{basis} for
$\mathcal{P}$ iff $P^*$ is equivalent to the observation projection
$P^o$ of the class $\mathcal{P}$.
The existence of a basis guarantees completeness in the
following sense:

\begin{theorem}[Completeness from Bases]
\label{thm:basis:completeness}
Let $\mathcal{P}$ be a \alert{uniform class} of PONDPs, let $P^o$
be the observation projection of $\mathcal{P}$, and let $C$ be a
trajectory constraint for each $P\in\mathcal{P}$.
If $\mathcal{P}$ has a basis, then a policy $\mu$ solves $P/C$
for all $P\in\mathcal{P}$ iff $\mu$ solves $P^o/C$.
\end{theorem}
\if\withproofs1
\begin{proof}
The proof is straightforward.
Let $P^*$ be a basis for $\mathcal{P}$ and $\mu$ be a policy.
If $\mu$ solves $P/C$ for all $P\in\mathcal{P}$, then it solves
$P^*/C$ and also $P^o/C$ since $P^*$ is equivalent to $P^o$.
On the other hand, suppose that $\mu$ solves $P^o/C$ and let $P$
be a problem in $\mathcal{P}$. Then, by Theorem~\ref{thm:gen+constraints},
$\mu$ solves $P/C$.
\end{proof}
\fi

\begin{theorem}[QNP Basis and Observation Projection]
\label{thm:qnp:basis}
The FONDP denoted by $R_V^o$ is the observation projection of the
class $\mathcal{R}_V$ made of all the PONDPs $R'_V$ that are similar
to $R_V$. The class $\mathcal{R}_V$ is \alert{uniform} and it contains
a basis $R_V^*$ which is equivalent to the FONDP denoted by $R_V^o$.
\end{theorem}
\if\withproofs1
\nosuchproof
\fi
}

\begin{theorem}
\label{thm:qnp:basis}
The FONDP denoted by $R_V^o$ is the observation projection of the
class $\mathcal{R}_V$ made of all the PONDPs $R'_V$ that are similar
to $R_V$.
\end{theorem}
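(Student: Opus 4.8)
The plan is to verify, one by one, that the six defining clauses H1–H6 of the observation projection of $\mathcal{R}_V$ (Definition~\ref{def:o-reduction}) coincide with the corresponding components of the FONDP denoted by $R_V^o$ (Definition~\ref{def:qnp:syntactic-projection}), where the latter is read as the explicit PONDP $P'_V$ it syntactically denotes via the construction $R_V \mapsto P_V$ applied to the boolean problem $R_V^o$. First I would fix notation: write $\mathcal{R}_V$ for the class of all QNPs similar to $R_V$ (Definition~\ref{def:qnp:similar}), and for each $R'_V \in \mathcal{R}_V$ write $P'_V = \tup{S',I',\Omega',Act,T',A',obs',F'}$ for the PONDP it denotes. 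Note all these share the same $F$, $Act$, $G$, and the same requirement clauses, differing only in $INC$, $DEC$, $Init_V$, and crucially agreeing on which of $X=0$, $X>0$ is initially possible for each $X$. I would also observe that every $P'_V$ has the same observation set, namely all $\tup{s_F, b}$ with $s_F$ a valuation of $F$ and $b$ a boolean valuation of $V$; call this common $\Omega$. So the class is uniform in the sense of Definition~\ref{dfn:class}, with $A_\omega$ the set of actions whose (observable) preconditions hold at $\omega$ and $T_\Omega$ the observations satisfying $G$.

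Next I would check the clauses. H1 ($S^o = \Omega$): immediate, since the state set of the boolean problem $R_V^o$ is exactly the set of valuations of $F' = F \cup \{X=0, X>0 : X \in V\}$ with $X=0$, $X>0$ mutually exclusive, which is in bijection with $\Omega$. H3, H4, H5: the actions of $R_V^o$ are $Act$; its goal is $G$, matching $T_\Omega$; its available-action function is ``preconditions hold'', matching $A_\omega$ — all by construction and by the uniformity just noted. H2 ($I^o$): an observation $\omega = \tup{s_F, b}$ is initial in $R_V^o$ iff $s_F$ is the unique STRIPS initial state and for each $X$, $b[X]=0$ is consistent with $R_V^o$'s $Init'$ (i.e., $X>0$ not forced) and likewise for $b[X]=1$; by Definition~\ref{def:qnp:syntactic-projection}(2) this says exactly $b[X]=0$ is possible iff $X=0 \in Init_V(X')$ for some similar $R'_V$, and since similar QNPs agree on which booleans are initially possible, this is equivalent to: there exists $R'_V \in \mathcal{R}_V$ and $s \in I'$ with $obs'(s) = \omega$. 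That is H2. The genuinely substantive clause is H6, the transition relation. I expect the main obstacle to be showing $\omega' \in F^o_{R_V^o}(a,\omega)$ iff there is some similar $R'_V$ with a transition $(s,a,s')$ in $F'$, $obs'(s)=\omega$, $obs'(s')=\omega'$. For the ($\supseteq$) direction: given such a transition, the STRIPS part evolves $s_F$ to $s'_F$ deterministically (same $F$ everywhere), and for each $X$ the booleanization of $s'_V[X]$ is one of the allowed non-deterministic outcomes prescribed by Definition~\ref{def:qnp:syntactic-projection}(3) — here I use the requirement clauses ($Dec$ can only lower or keep the value, so from $X>0$ either boolean outcome is consistent and from $X=0$ only $X=0$; $Inc$ from $X=0$ strictly increases, forcing $X>0$, which is why no constraint is needed for increments, per the footnote), so $\omega'$ is indeed a $F^o$-successor in $R_V^o$. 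For ($\subseteq$): given $\omega' \in F^o_{R_V^o}(a,\omega)$, I must exhibit a concrete similar $R'_V$ realizing it; this is where I would construct the witness — pick $INC$, $DEC$ functions (e.g.\ ``$\pm 1$, clipped at $0$'', or ``decrease to $0$ in one step'' as needed) and an $Init_V$ compatible with $\omega$ being reachable, so that from a state $s$ with $obs'(s)=\omega$ the action $a$ has a concrete successor $s'$ with $obs'(s')=\omega'$. The point is that the requirement clauses are exactly tight enough that every boolean transition allowed by Definition~\ref{def:qnp:syntactic-projection}(3) is realized by some admissible choice of $INC$/$DEC$.

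Having matched H1–H6, the conclusion is immediate: $R_V^o$ (as a FONDP, with $obs$ the identity on features) is literally the observation projection $P^o$ of the class $\mathcal{R}_V$. I would close by remarking that the only place any care is needed is the bidirectional check of H6, and that the forward direction there is the routine one (read off the allowed outcomes from Definition~\ref{def:qnp:syntactic-projection}(3)), while the backward direction is the one requiring an explicit witness QNP — but since similarity imposes no constraint on $INC$/$DEC$ beyond the universal requirement clauses, such a witness always exists.
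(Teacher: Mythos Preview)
The paper does not actually supply a proof of this theorem: the source carries only a suppressed \texttt{nosuchproof} placeholder. Your clause-by-clause verification of H1--H6 against Definition~\ref{def:qnp:syntactic-projection} is correct and is the natural route, and your identification of H6 as the only substantive clause is accurate. For the backward direction of H6, the paper itself (in the paragraph following Theorem~\ref{thm:qnp:completeness}) gestures at precisely the witness you sketch: a similar QNP $R^*_V$ with each variable ranging over $\{0,1\}$ and $DEC(X,a,s)=\{0,s_V[X]\}$, making $R^*_V$ transition-equivalent to the boolean $R^o_V$; a single such witness suffices for all boolean transitions, so you need not tailor $INC$/$DEC$ per transition. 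One small quibble: in that backward direction you only need to exhibit a transition $(s,a,s')$ in the successor function $F'$ of some similar $R'_V$, not a transition from a \emph{reachable} state --- H6 in Definition~\ref{def:o-reduction} quantifies over all of $F$ --- so your remark about choosing $Init_V$ to make $\omega$ reachable is unnecessary (though harmless).
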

\if\withproofs1
\nosuchproof
\fi

The generalization captured by Theorem~\ref{thm:gen+constraints}
implies that:

\begin{theorem}[QNP Generalization]
\label{thm:qnp:generalization}
Let $\mu$ a policy that solves $R_V^o/C_V$; i.e.\ the FONDP denoted by $R_V^o$
given $C_V$. Then, $\mu$ solves $R'_V/C_V$ for all QNPs $R'_V$ similar to $R_V$.
\end{theorem}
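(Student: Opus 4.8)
The plan is to reduce Theorem~\ref{thm:qnp:generalization} to the combination of Theorem~\ref{thm:qnp:basis} and Theorem~\ref{thm:gen+constraints}. Concretely, suppose $\mu$ solves $R_V^o/C_V$. By Theorem~\ref{thm:qnp:basis}, the FONDP denoted by $R_V^o$ is exactly the observation projection $P^o$ of the class $\mathcal{R}_V$ of all QNPs (viewed as PONDPs in the sense of the earlier ``$P_V$'' construction) similar to $R_V$. Now I want to invoke Theorem~\ref{thm:gen+constraints} with $\mathcal{P} := \mathcal{R}_V$ and $C := C_V$. The only things to check before the machinery applies are: (i) that $C_V$ is a legitimate trajectory constraint over $P^o = R_V^o$, i.e.\ a set of infinite observation-action sequences over $R_V^o$, and (ii) that the notion of ``satisfying $C_V$'' for a trajectory of a concrete $R'_V$ matches, under $obs$, satisfaction of $C_V$ as a constraint over $R_V^o$. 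Both are essentially bookkeeping: $C_X$ is phrased purely in terms of which $Dec(X)$/$Inc(X)$ actions occur and whether the observable atom $X=0$ holds, so it depends only on the observation-action sequence; and the convention fixed just before the theorem (that the non-deterministic actions in $R_V^o$ ``are'' the $Dec(X)$ actions) is exactly what makes $obs(\tau)$ carry enough information to evaluate $C_X$. Hence $C_V$ is an observation-level constraint over $R_V^o$, and for a trajectory $\tau$ of $R'_V$, $\tau$ satisfies $C_V$ iff $obs(\tau) \in C_V$.

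With that in place, the proof is short: fix any QNP $R'_V$ similar to $R_V$, so $R'_V \in \mathcal{R}_V$ and hence (as a PONDP) reduces to $R_V^o$ via $obs$ per Theorem~\ref{thm:qnp:basis}. Apply Theorem~\ref{thm:gen+constraints}: since $\mu$ solves $(R_V^o)/C_V$ and $R_V^o$ is the observation projection of $\mathcal{R}_V$ and $C_V$ is a constraint over $R_V^o$, we conclude that $\mu$ solves $R'_V/C_V$. As $R'_V$ was an arbitrary QNP similar to $R_V$, this is precisely the statement. One subtlety worth a sentence in the writeup: Theorem~\ref{thm:gen+constraints} speaks of ``a policy that solves $P^o/C$,'' and we need $\mu$, which was presented as a policy over the observation alphabet of $R_V^o$ (boolean valuations including the atoms $X=0$, $X>0$), to be usable as a policy over every $R'_V$; but this is exactly what Lemma~\ref{lemma:obsprojection}(2) guarantees once the observation spaces coincide, which they do by construction of ``similar'' QNPs (Definition~\ref{def:qnp:similar} keeps $F$, $V$, and the observability pattern fixed, so $\Omega$ is the same across the class).

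The main obstacle, such as it is, is not a deep argument but making sure the identifications line up: namely that the ``syntactic'' FONDP $R_V^o$ of Definition~\ref{def:qnp:syntactic-projection} really denotes the same FONDP (states $=$ boolean valuations $=$ the observation set $\Omega$ of the $P_V$'s, same initial states, same transitions) as the semantic observation projection of Definition~\ref{def:o-reduction} applied to $\mathcal{R}_V$ — which is precisely the content of Theorem~\ref{thm:qnp:basis}, so I would simply cite it rather than reprove it here. Everything else is a direct instantiation. So the proof body reads: ``By Theorem~\ref{thm:qnp:basis}, $R_V^o$ is the observation projection of $\mathcal{R}_V$. Since $C_V$ is a trajectory constraint over $R_V^o$ and $\mu$ solves $R_V^o/C_V$, Theorem~\ref{thm:gen+constraints} gives that $\mu$ solves $R'_V/C_V$ for every $R'_V \in \mathcal{R}_V$, i.e.\ for every QNP $R'_V$ similar to $R_V$.'' I would leave it at that, optionally flagging the two bookkeeping points above in a parenthetical.
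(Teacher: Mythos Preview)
Your proposal is correct and matches the paper's approach exactly: the paper's proof is the single line ``Direct application of Theorem~\ref{thm:gen+constraints} using the fact that the class $\mathcal{R}_V$ is [a class of PONDPs] by Theorem~\ref{thm:qnp:basis}.'' Your additional bookkeeping about $C_V$ being an observation-level constraint is a reasonable elaboration but not something the paper spells out.
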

\if\withproofs1
\begin{proof}
Direct application of Theorem~\ref{thm:gen+constraints} using the fact that the
class $\mathcal{R}_V$ is \alert{uniform} by Theorem~\ref{thm:qnp:basis}.
\end{proof}
\fi

\medskip

In addition, the constraints $C_V$ are strong enough in QNPs for making
the abstraction $R_V^o$ complete for the class of problems $\mathcal{R}_V$
similar to $R_V$: 

\begin{theorem}[QNP Completeness]
\label{thm:qnp:completeness}
Let $\mu$ be a policy that solves the class of problems $\mathcal{R}_V$
made up of all the  QNPs $R'_V$  that are similar to $R_V$ given $C_V$.
Then $\mu$ must solve the projection $R_V^o$ given $C_V$.
\end{theorem}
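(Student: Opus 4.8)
The plan is to follow the template of the general completeness result (Theorem~\ref{thm:completeness}), but to use the syntactic form of QNPs to show that the \emph{particular} constraint $C_V$ already suffices. The crux is an auxiliary \emph{realization claim}: every trajectory of the FONDP denoted by $R_V^o$ is of the form $obs(\tau')$ for some trajectory $\tau'$ of some QNP $R'_V$ similar to $R_V$. Granting this, the theorem follows. First, $\mu$ is valid for $R_V^o$: realizing finite trajectory-prefixes of $R_V^o$ in similar QNPs, and using that $\mu$ is valid for those QNPs and that all QNP action preconditions are observable (so $A^o(\omega)=A_\omega=A(s)$ whenever $obs(s)=\omega$), gives $\mu(\omega_0\cdots\omega_i)\in A^o(\omega_i)$ whenever it is defined, exactly as in the proof of Theorem~\ref{thm:completeness}. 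Second, let $\tau=\langle\omega_0,a_0,\omega_1,a_1,\ldots\rangle$ be a $\mu$-trajectory of $R_V^o$ that satisfies $C_V$; take $R'_V$ and $\tau'=\langle s'_0,a_0,s'_1,a_1,\ldots\rangle$ with $obs(\tau')=\tau$, so $obs(s'_i)=\omega_i$ for all $i$. Then $\tau'$ is generated by $\mu$ (since $\mu(\omega_0\cdots\omega_n)=a_n$), and it is maximal (if $\tau'$ were finite, so is $\tau$, and maximality of $\tau$ gives $\mu(\omega_0\cdots\omega_n)=\bot$), hence $\tau'$ is a $\mu$-trajectory of $R'_V$. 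Moreover $\tau'$ satisfies $C_V$: the constraint $C_X$ (Definition~\ref{def:qnp:constraint}) refers only to which actions have $Dec(X)$ or $Inc(X)$ effects and to which states have $X=0$, and these coincide along $\tau'$ and along $\tau=obs(\tau')$ because $R'_V$ and $R_V$ have the same action structure and $obs$ records the value of the atom $X=0$. Since $\mu$ solves $R'_V/C_V$, $\tau'$ is goal reaching, and as goals are observable, $\tau=obs(\tau')$ is goal reaching; hence $\mu$ solves $R_V^o/C_V$.

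It remains to prove the realization claim, which I expect to be the main obstacle --- not conceptually, but in the bookkeeping. The idea is to use one fixed, maximally permissive similar QNP and to read the numeric values off $\tau$. Let $R'_V$ have the same $F,Init,Act,G,V$ as $R_V$, with $Init_V'=Init_V$, with $DEC'(X,a,s)=\{v:0\le v\le s[X]\}$, and with $INC'(X,a,s)=\{v:v>s[X]\}$ if $s[X]=0$ and $\{v:v\ge s[X]\}$ otherwise; this is a legal QNP, and since only $INC'$ and $DEC'$ change, $R'_V$ is similar to $R_V$. Given a trajectory $\tau=\langle\omega_0,a_0,\omega_1,\ldots\rangle$ of $R_V^o$, define $\tau'$ by letting the propositional part of $s'_i$ be the $F$-part of $\omega_i$ (the $F$-atoms obey the same STRIPS rules in $R_V^o$ and in $R'_V$), and the numeric part by: $s'_0[X]$ is any element of $Init_V(X)$ whose boolean value matches $\omega_0$ --- such an element exists because, by Theorem~\ref{thm:qnp:basis}, $\omega_0\in I^o$ equals $obs(s)$ for some initial state $s$ of some QNP similar to $R_V$, and similarity preserves which of $X=0$, $X>0$ is initially possible; and then inductively $s'_{i+1}[X]=s'_i[X]+1$ if $a_i$ has an $Inc(X)$ effect, $s'_{i+1}[X]=0$ if $a_i$ has a $Dec(X)$ effect and $\omega_{i+1}$ has $X=0$, $s'_{i+1}[X]=s'_i[X]/2$ if $a_i$ has a $Dec(X)$ effect and $\omega_{i+1}$ has $X>0$, and $s'_{i+1}[X]=s'_i[X]$ otherwise. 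A routine induction shows each choice lies in the relevant $INC'$/$DEC'$ range and keeps $b[s'_i[X]]$ in agreement with $\omega_i$; the only slightly delicate point is the halving case, which needs $s'_i[X]>0$, and this holds because $R_V^o$ never allows a $Dec(X)$ transition from $X=0$ to $X>0$, so $\omega_i$ must already have $X>0$. Hence $\tau'$ is a trajectory of $R'_V$ with $obs(\tau')=\tau$, also when $\tau$ is finite.

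Finally, I would point out where $C_V$ is and is not used, since this clarifies the statement. The realization claim holds for \emph{every} trajectory of $R_V^o$, with no appeal to $C_V$; in particular $R_V^o$ has $\mu$-trajectories that are not goal reaching yet are realizable in similar QNPs (e.g.\ decreasing $X$ forever while keeping it positive), so one cannot drop the ``$/C_V$''. The constraint enters at exactly one place: it is preserved by $obs$ in both directions, which is what lets us transfer ``$\mu$ solves $R'_V/C_V$'' into the goal-reaching conclusion about $\tau'$ and hence $\tau$. This mirrors the role of $C_V$ in Theorem~\ref{thm:qnp:generalization}, and together the two theorems say that, for this class, the abstraction $R_V^o$ together with $C_V$ exactly captures the policies that work on all similar QNPs under $C_V$.
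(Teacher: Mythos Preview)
Your argument is correct, but it takes a longer road than the paper's. The paper's proof is a one-line ``basis'' observation: the class $\mathcal{R}_V$ already contains a QNP $R^*_V$ in which every variable ranges over $\{0,1\}$, with $DEC^*(X,a,s)=\{0,s[X]\}$ and $INC^*(X,a,s)=\{1\}$ and $Init^*_V(X)\subseteq\{0,1\}$ chosen to match the initially-possible booleans of $R_V$. This $R^*_V$ is literally isomorphic to the FONDP $R_V^o$ (the two values $0,1$ correspond to the atoms $X{=}0$, $X{>}0$), so ``$\mu$ solves $R^*_V/C_V$'' and ``$\mu$ solves $R_V^o/C_V$'' are the same statement. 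No trajectory-by-trajectory realization is needed.

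Your route instead replays the general completeness template (Theorem~\ref{thm:completeness}) inside the QNP world: you build a maximally permissive similar QNP $R'_V$ with unbounded real values and then realize each $R_V^o$-trajectory in it via the halving trick. This is sound, and your check that $C_V$ transfers along $obs$ is exactly the point that makes the specific constraint sufficient. What you gain is an explicit demonstration that \emph{every} projection trajectory (not just $\mu$-trajectories) is the observation of a concrete trajectory, and your closing paragraph about where $C_V$ is and is not used is a nice clarification absent from the paper. What the paper gains is brevity: once you notice that a two-valued QNP sits inside $\mathcal{R}_V$ and coincides with $R_V^o$, all the bookkeeping (initial values from $Init_V$, the halving case analysis, the ``$Dec$ cannot go $0\!\to\!{>}0$'' check) evaporates. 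In fact your construction could be simplified to the paper's by taking values in $\{0,1\}$ instead of halving; then your $R'_V$ becomes the paper's $R^*_V$.
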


\noindent This is because the class $\mathcal{R}_V$ contains a problem $R^*_V$
where each variable $X$ has two possible values $X=0$ and $X=1$ such that the
semantics of the $Dec(X)$ and $Inc(X)$ effects 
make $R^*_V$ equivalent to the projection $R^o_V$, with the valuations
over the two-valued $X$ variables in correspondence with the boolean values $X=0$
and $X > 0$ in $R_V^o$.

\Omit{ 
The results above are very general and yet they can be extended in several ways.
For example, action preconditions and goals that feature atoms $X=0$ and $X > 0$ only,
can be extended to involve other value landmarks $L_X$ in the form of atoms $X=L_X$, $X < L_X$, etc. 
For this, the set of QNP constraints $C_X$ need to be extended and the semantics given
by the $INC$ and $DEC$ functions adjusted, so that no landmarks can be ``skipped over''.
Some of these extensions are considered in \cite{srivastava:aaai2015} that address
similar classes of problems but using a different formal framework. 
}

\if\withproofs1
\begin{proof}
Direct from Theorems~\ref{thm:basis:completeness} and \ref{thm:qnp:basis}.
\end{proof}
\Omit{
\begin{proof}
We provide two different proofs. In the first proof we construct a QNP $R_V^*$ such that
\begin{enumerate}[1.]\denselist
\item $R_V^*$ is similar to $R_V$, and
\item for every $\mu$-trajectory $\tau'$ in $R_V^o$, there is a $\mu$-trajectory $\tau$ in $R_V^*$
  with $\tau'=obs(\tau)$.
\end{enumerate}
This QNP is sufficient to show the claim. Indeed, since $R_V^*$ is similar
to $R_V$, then $\mu$ solves $R_V^*$. Let $\tau'$ be a $\mu$-trajectory on $R_V^p$ that
satisfies $C_V$. We need to show that $\tau'$ reaches the goal. By the second condition,
there is a $\mu$-trajectory $\tau$ on $R_V^*$ such that $\tau'=obs(\tau)$. Since the constraint
is an \emph{observation constraint}, $\tau$ also satisfies $C_V$.
By the first condition, $\mu$ solves $R_V^*$ and $\tau$ thus reaches the goal.
Since goals are observable, $\tau'$ reaches the goal as well.

In the second proof, we show that the abstract problem $R_V^o$ is indeed a QNP that
reduces to itself and hence the statement of the theorem becomes trivial.

All the variables in the QNP $R_V^*$ that we construct for the first proof
always have values in $\{0,1\}$. This QNP corresponds to the tuple
$R_V^*=\tup{F^*,Init^*,Act^*,G^*,V^*,Init^*_V,$ $INC^*,DEC}^*$, which 
is identical to $R_V$, except:
\begin{enumerate}[\small$\bullet$]\denselist
\item For each variable $X\in V$, action $a\in Act$ such that $a$ increases $X$,
  and every state $s$ on which $a$ is applicable, let $INC^*(X,a,s)=\{1-s[X]\}$ 
  where $s[X]$ is the value assigned to $X$ by state $s$.
\item For each variable $X\in V$, action $a\in Act$ such that $a$ decreases $X$,
  and every state $s$ on which $a$ is applicable, let $DEC_*(X,a,s)=\{0,s[X]\}$.
\item For defining $Init^*_V(X)$, for variable $X$, we inspect $R_V$.
  If $Init^*_V(X)$ contains 0 and a value greater than 0, then $Init^*_V(X)=\{0,1\}$.
  If $Init^*_V(X)$ contains only 0, then $Init^*_V(X)=\{0\}$.
  If $Init^*_V(X)$ contains only values greater than 0, then $Init^*_V(X)=\{1\}$.
\end{enumerate}

It is not hard to check that $R_V^*$ is similar to $R_V$ (cf.\ Definition~\ref{def:qnp:similar}).
We finish the first proof by showing condition 2.
Let $\tau'=\tup{\omega_0,a_0,\omega_1,a_1,\ldots}$ be a $\mu$-trajectory in $R_V^o$.
We need to construct a $\mu$-trajectory $\tau=\tup{s_0,a_0,s_1,a_1,\ldots}$
on $R_V^*$ such that $obs(\tau)=\tau'$; i.e.\ $obs(s_i)=\omega_i$ for $i\geq 0$.
We construct the initial state $s_0$ in $\tau$ as follows:
\begin{enumerate}[--]\denselist
\item For fluents $p\in F$, $s_0[p]=\text{True}$ iff $\omega_0\vDash p$ (recall that all fluents in $F$ are observable).
\item For variables $X\in V$, $s_0[X]=0$ if $\omega_0\vDash X=0$ and $s_0[X]=1$ if $\omega_0\vDash X>0$.
\end{enumerate}
Clearly, $obs(s_0)=\omega_0$.
Assume now that we have constructed a prefix of $\tau$ of length $2n$
such that $obs(s_i)=\omega_i$ for $0\leq i< n$. We construct the
state $s_n$ as follows:
\begin{enumerate}[--]\denselist
\item For fluents $p\in F$, $s_n[p]=\text{True}$ iff $\omega_n\vDash p$.
\item For variables $X\in V$, $s_n[X]=0$ if $\omega_n\vDash X=0$ and $s_n[X]=1$ if $\omega_n\vDash X>0$.
\end{enumerate}

It only remains to show that $\tau$ is indeed a $\mu$-trajectory on $R_V^*$.
We show by induction on $n$ that the prefix $\tau_n=\tup{s_0,a_0,\ldots,s_n}$ is a $\mu$-trajectory on $R_V^*$.
The state $s_0$ is clearly a possible initial state given the definition of
$Init^*_V$.
Assume that the prefix $\tau_n$ is a $\mu$-trajectory and consider $\tau_{n+1}=\tup{\tau_n,a_n,s_{n+1}}$.
The action $a_n$ is applicable at $s_n$ since $obs(s_n)=\omega_n$ and all preconditions
of $a_n$ are observable.
For the same reason, for fluents $p$, $s_{n+1}\vDash p$ iff $\omega_{n+1}\vDash p$ iff $s'\vDash p$
where $s'$ is any of the possible states that result after applying the (applicable) action $a$ on the state $s$;
this is well defined since the ``STRIPS part'' of a QNP is deterministic.
Finally, for variable $X$, we consider the following four cases:
\begin{enumerate}[\small$\bullet$]\denselist
\item $\omega_n\vDash X=0$ and $\omega_{n+1}\vDash X=0$.
  In this case, $Inc(X)$ cannot be an effect of $a_n$.
  By construction, $s_n[X]=0$ and $s_{n+1}[X]=0$.
  This is a possible transition since $0\in DEC^*(X,a_n,s_n)$.
\item $\omega_n\vDash X=0$ and $\omega_{n+1}\vDash X>0$.
  In this case, $Inc(X)$ is an effect of $a_n$.
  By construction, $s_n[X]=0$ and $s_{n+1}[X]=1$.
  This is a possible transition since $1\in INC^*(X,a_n,s_n)$.
\item $\omega_n\vDash X>0$ and $\omega_{n+1}\vDash X=0$.
  In this case, $Dec(X)$ is an effect of $a_n$.
  By construction, $s_n[X]=1$ and $s_{n+1}[X]=0$.
  This is a possible transition since $s_n[X]\in DEC^*(X,a_n,s_n)$.
\item $\omega_n\vDash X>0$ and $\omega_{n+1}\vDash X>0$.
  By construction, $s_n[X]=1$ and $s_{n+1}[X]=1$.
  This is a possible transition since $0\in INC^*(X,a_n,s_n)$ and $0\in DEC^*(X,a_n,s_n)$.
\end{enumerate}
Therefore, the prefix $\tau_{n+1}$ is a $\mu$-trajectory on $R_V^*$.

\medskip
\noindent
For the second proof, it is not difficult to see that the QNP $R_V^*$
is indeed identical to $R_V^o$ if we identify observations $X>0$ with
values $X=1$. Under this identification, the QNP becomes fully
observable with $obs(s)=s$ for the states of the QNP. Moreover,
$R_V^*$ reduces to $R_V^o$ (itself). Hence, a policy $\mu$ that solves
every QNP similar to $R_V$, solves $R_V^*$ and thus it also solves $R_V^o$.
\end{proof}
}
\fi

\subsection{QNP Solving as FOND Planning}

The syntactic projection $R_V^o$ of a QNP $R_V$ represents a FONDP with non-deterministic (boolean)
effects $X>0\,|\,X=0$ for the actions in $R_V$ with $Dec(X)$ effects.
It may appear from Theorem~\ref{thm:qnp:generalization} that one could
use off-the-shelf FOND planners for solving  $R_V^o$ and hence for solving
all QNPs similar  to $R_V$. There is, however,  an  obstacle: the effects $X>0\,|\,X=0$
are not   \emph{fair}.
Indeed, even executing forever only actions with $Dec(X)$ effects does not guarantee that eventually $X=0$ will be true.
In order to use fair (strong cyclic) FOND planners off-the-shelf,
we thus  need  to compile the FONDP $R^o_V$ \emph{given the constraints}  $C_V$ into a \emph{fair  FONDP with no constraints}.

For this, it is convenient to make two assumptions and to extend the  problem $R_V$ with extra booleans and actions that do not affect the problem
but provide us with handles in the projection. The assumptions are that actions with $Dec(X)$ effects have the (observable) precondition $X > 0$, and more critically,
that actions feature decrement effects for at most one variable. The new  atoms are  $q_X$, one for each variable $X \in V$, initially all false.
The new   actions for each variable $X$ in $V$ are  $set(X)$ and $unset(X)$, the first with no precondition and
effect $q_X$, the second with precondition $X=0$ and effect $\neg q_X$. Finally, preconditions $q_X$
are added to actions with effect $Dec(X)$ and  precondition $\neg q_X$  to all actions with effect $Inc(X)$.
Basically, $q_X$ is set in order to decrease the variable $X$ to zero. When $q_X$ is set,  $X$ cannot be
increased and $q_X$ can be unset only when $X=0$. We say that $R_V$ is \emph{closed} when $R_V$ is  extended in this way
and complies with the assumptions above (and likewise for its projection $R_V^o$).

\begin{theorem}[Generalization with FOND Planner]
  $\mu$ is a fair solution to the FONDP  $R^o_V$ for a closed QNP $R_V$ 
  iff $\mu$ solves all QNPs that are similar to $R_V$ given the constraints $C_V$.
\end{theorem}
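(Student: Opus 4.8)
The plan is to reduce this final theorem to the machinery already developed, principally Theorem~\ref{thm:qnp:generalization} (QNP Generalization) and Theorem~\ref{thm:qnp:completeness} (QNP Completeness), with the remaining work being to show that for a \emph{closed} QNP the constraints $C_V$ and the fairness constraint $C_F$ over $R_V^o$ coincide on $R_V^o$ in the relevant sense. More precisely, I would prove the two equivalences (i)~$\mu$ is a fair solution to $R_V^o$ iff $\mu$ solves $R_V^o/C_V$, and (ii)~$\mu$ solves $R_V^o/C_V$ iff $\mu$ solves every QNP similar to $R_V$ given $C_V$. Part (ii) is immediate: the ``only if'' direction is Theorem~\ref{thm:qnp:generalization} (applied to each similar $R_V'$, noting $C_V$ is an observation constraint over $R_V^o$ and hence over each similar problem), and the ``if'' direction is Theorem~\ref{thm:qnp:completeness}. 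So the crux is part (i), which is where closedness is used.

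For part (i), ``$\mu$ fair solution of $R_V^o$'' means every \emph{fair} $\mu$-trajectory is goal reaching, i.e.\ $\mu$ solves $R_V^o/C_F$, where $C_F$ is the fairness constraint on the non-deterministic $Dec(X)$ actions (each $Dec(X)$ action taken infinitely often from a given abstract state produces both outcomes $X=0$ and $X>0$ infinitely often). I would show that over $R_V^o$, a trajectory satisfies $C_F$ \emph{iff} it satisfies $C_V$ --- equivalently, $C_F$ and $C_V$ have the same infinite models among trajectories of $R_V^o$ --- and then part (i) follows from Definition~\ref{def:P/C}. The direction $C_F \Rightarrow C_V$ (on $R_V^o$): if along an infinite trajectory some $Dec(X)$ action is done infinitely often and $Inc(X)$ actions finitely often, then (this is where the $q_X$ gadget and the single-decrement assumption enter) after the last $Inc(X)$ the atom $q_X$ stays set while we keep doing $Dec(X)$; by the precondition $X>0$ on $Dec(X)$ actions, such an action is only applicable when $X>0$, so from the abstract state with $X>0$ the $Dec(X)$ action is enabled and, taken infinitely often, by $C_F$ yields the $X=0$ outcome infinitely often --- so $X=0$ holds infinitely often, which is exactly $C_X$; intersecting over all $X \in V$ gives $C_V$. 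The converse direction $C_V \Rightarrow C_F$ on $R_V^o$ is the more delicate one: I need that any $C_V$-satisfying trajectory also meets fairness, i.e.\ that whenever a $Dec(X)$ action is applied infinitely often from abstract state $\omega$, both the $X=0$ and the $X>0$ successors occur infinitely often. Here I would argue: if the $X=0$ outcome occurred only finitely often then from some point on $X>0$ persists while $Dec(X)$ fires infinitely often; and since $Inc(X)$ is either never done (if $Dec(X)$ fires infinitely often and $q_X$ is held, $Inc(X)$ is disabled by its precondition $\neg q_X$) or could still be done --- and this is precisely the gap: I must ensure $Inc(X)$ is done only finitely often. The closedness assumptions (actions with $Dec(X)$ effect require $q_X$; actions with $Inc(X)$ effect require $\neg q_X$; $unset(X)$ requires $X=0$) force that to do $Dec(X)$ infinitely often you must hold $q_X$ cofinitely (each unset needs a visit to $X=0$, which if it happened infinitely often already gives what we want), hence $Inc(X)$ is disabled cofinitely, hence done finitely often; then $C_V$ forces $X=0$ infinitely often --- contradiction. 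The $X>0$ outcome occurring infinitely often is automatic since $Dec(X)$ is only applicable when $X>0$ and its application does not by itself preclude the $X>0$ branch.

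I would then assemble: $\mu$ fair solution of $R_V^o$ $\iff$ $\mu$ solves $R_V^o/C_F$ $\iff$ (by the equivalence of $C_F$ and $C_V$ as constraints over $R_V^o$ just established) $\mu$ solves $R_V^o/C_V$ $\iff$ (by part (ii)) $\mu$ solves every QNP similar to $R_V$ given $C_V$. The single-decrement-per-action assumption is used to keep the $q_X$ handles independent across variables so the per-variable argument composes; the precondition $X>0$ on $Dec(X)$ actions is used so that a $Dec(X)$ action applied infinitely often is genuinely a non-deterministic choice from an $X>0$ state (rather than a no-op at $X=0$), which is what makes fairness bite.

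\textbf{Main obstacle.} The hard part is the direction $C_V \Rightarrow C_F$ over $R_V^o$, i.e.\ showing the closedness gadget genuinely forces $Inc(X)$ to be done only finitely often along any trajectory in which $Dec(X)$ is done infinitely often without $X$ reaching $0$ infinitely often --- this is the only place the $q_X$/$set$/$unset$ construction and the structural assumptions are essential, and it requires carefully tracking the interplay of the preconditions $q_X$, $\neg q_X$, and the ``$unset(X)$ needs $X=0$'' constraint along an arbitrary infinite trajectory. Everything else is bookkeeping on top of the already-proved generalization and completeness theorems.
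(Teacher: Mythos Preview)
Your approach is essentially the same as the paper's: both reduce the theorem to the equivalence ``$\mu$ is a fair solution of $R_V^o$ iff $\mu$ solves $R_V^o/C_V$'' and then invoke Theorems~\ref{thm:qnp:generalization} and~\ref{thm:qnp:completeness}, and both argue that equivalence via the $q_X$ gadget exactly as you outline (in particular, your identified ``main obstacle'' is the paper's $(\Rightarrow)$ direction). Two small remarks: first, you phrase the key lemma as ``$C_F$ and $C_V$ have the same infinite models among trajectories of $R_V^o$,'' which is a bit stronger than what the paper actually argues (the paper works directly with non-goal-reaching $\mu$-trajectories rather than arbitrary trajectories), and your dismissal of the ``$X>0$ outcome'' case as ``automatic'' conflates that outcome being \emph{possible} with it occurring infinitely often --- a Case-A unfairness (always getting the $X=0$ outcome) is not excluded by your sentence, though the paper's sketch is equally terse on this point; second, the $q_X$ gadget is not actually needed in your $C_F \Rightarrow C_V$ direction (finiteness of $R_V^o$ plus fairness plus the $X>0$ precondition on $Dec(X)$ already force the $X=0$ outcome infinitely often), so your parenthetical there is misplaced --- the gadget does its real work only in the other direction, as you correctly use it later.
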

\if\withproofs1
\nosuchproof
\fi

\noindent \emph{Sketch:} We need to show that  $\mu$ is a fair solution to $R^o_V$ iff $\mu$ solves $R^o_V/C_V$. 
The rest follows from Theorems~\ref{thm:qnp:generalization} and \ref{thm:qnp:completeness}. 
($\Rightarrow$). If $\mu$ does not solve the FONDP $R^o_V$ given $C_V$, 
there must a $\mu$-trajectory  $\tau$ that is not goal reaching but that satisfies $C_V$  and  is not fair in $R^o_V$.
Thus, there must be a subtrajectory $\tup{s_i,a_i,\ldots, s_{i+m}}$ that forms a loop with $s_{i+m}=s_i$, where
no $s_k$ is a goal state, and 1)~some $a_k$ has a $Dec(X)$ effect in $R_V$, and 2) $X > 0$ is true in all $s_k$, $i \leq k \leq i+m$.
1) must be true as $\tau$  is not fair in $R^o_V$ and  only actions with $Dec(X)$ actions in $R_V$ are not deterministic in $R_V^o$,
and 2) must be true as, from the assumptions in $R_V$, $X=0$ needs to be achieved by an action that decrements $X$, in contradiction with
the assumption that $\tau$ is not fair in $R_V^o$.  Finally, since $\tau$ satisfies $C_V$,  then it  must contain infinite  actions  with $Inc(X)$ effects, 
but  then the  loop must feature  $unset(X)$ actions with precondition $X=0$ in contradiction with 2. 
($\Leftarrow$) If $\mu$ solves $R^o_V/C_V$ but $\mu$ is not a fair solution to $R^o_V$, then there must be an infinite $\mu$-trajectory $\tau$
that is not goal reaching and does not satisfy $C_V$,  but which is fair in $R^o_V$. This means that there is a loop in $\tau$
with some $Dec(X)$ action, no $Inc(X)$ action, and where $X=0$ is false. But $\tau$ can't then be fair in $R^o_V$.\qed

\Omit{
\begin{theorem}[Generalization]
 A fair solution $\mu$ to a closed $R^o_V$ is a solution to all QNPs that are similar to $R_V$ given the constraints $C_V$.
\end{theorem}
\if\withproofs1
\nosuchproof
\fi

\begin{theorem}[Completeness]
If there is a policy that solves all the QNPs that are similar to $R_V$ given the constraints $C_V$,
  then   there is a fair solution to $R_V^o$.
\end{theorem}
\if\withproofs1
\nosuchproof
\fi

\hg{ *** Not sure at all if these theorems are true. Blai and Sasha need to prove them :-).  Possibly I'm simplifying too much and some changes are needed
  (e.g. actions with $Dec(X)$ may be needed for other outcomes. If action has $Dec(X)$ and $Dec(Y)$ effects, probably need different copies of action:
  some with prec $q_X$, other with prec $q_Y$; possibly one with none of these/all combinations?. Intention of what needs to be proved hopefully  is clear.
  BTW: by policies I mean memoryless policies. We can discuss more general policies but unless that we prove that the results apply to them, I'd leave that only for discussion.}
}

\section{Discussion}

We have studied ways in which  a (possibly infinite) set of problems
with partial observations (PONDPs) that satisfy a set of trajectory constraints
can all be solved by solving  a \emph{single}  fully observable problem (FONDP)
given by the common observation projection, augmented with the trajectory constraints. 
The trajectory constraints play a crucial role in adding enough expressive power
to the  observation projection. The single  abstract problem  can be solved
with automata theoretic techniques typical of LTL synthesis when the trajectory constraints can be expressed in
LTL, and in some cases, by more efficient FOND planners. 

\Omit{
Generally speaking, our work aims at clarifying the core of generalized
planning: given a possibly infinite set of problems sharing actions
and observations, use their commonalities to generate a
single abstract problem to solve, and then apply the solution to all
concrete problems in the set.
}

The class of qualitative numerical problems  are related to those considered  by \citeauthor{srivastava:aaai2011}\ [\citeyear{srivastava:aaai2011,srivastava:aaai2015}]
although the theoretical foundations are different. We obtain the FONDPs $R_V^o$ from an explicit observation projection,
and rather than using FOND planners to provide \emph{fair} solutions to $R_V^o$ that are then checked for termination, 
we look for strong solutions to $R_V^o$ \emph{given}  a set of explicit trajectory constraints $C_V$, and show that under
some conditions, they can be obtained from fair solutions to a suitable transformed problem. 
It is also  interesting to compare our work to the approach adopted to deal with the one-dimensional planning problems
\cite{hu10correctness,hu:generalized}.
There we have infinitely many concrete problems that are all identical except for
an unobservable parameter ranging over naturals that can only decrease.
A technique for solving such generalized planning problems is based on
defining one single abstract planning problem to solve that is ``large
enough'' \cite{hu:generalized}. Here, instead, we would address such
problems by considering a much smaller abstraction, the observation
projection, but with trajectory constraints that  capture  that the hidden
parameter can only decrease.  

Our work is also relevant for planning under incomplete
information.  The prototypical example is the tree-chopping problem of felling a tree that requires an 
unknown/unobservable number of chops. This was studied by \citeay{SardinaGLL06},
where, in our terminology, the authors analyse exactly the issue of
losing the ``global property'' when passing to the ``observation 
projection''.
Finally, our approach can be seen as a concretization of 
insights by \citeay{DeGMRS16} where trace constraints are shown to be 
necessary for the belief-state construction to work on
infinite domains.

\section*{Acknowledgments}
H.\ Geffner was supported by grant TIN2015-67959-P, MINECO, Spain;
S.\ Rubin by a  Marie Curie fellowship of INdAM;  
and G.\ De Giacomo  by the Sapienza project 
``Immersive Cognitive Environments''.

\bibliographystyle{named}
\bibliography{control}

\end{document}